\newtheorem{thm}{Theorem}
\newtheorem{observation}{Observation}
\DeclareMathOperator*{\argmin}{argmin}
\DeclareMathOperator*{\argmax}{argmax}
\journal{Pattern Recognition}
\begin{document}

\begin{frontmatter}



\title{Global and Local Structure Preserving Sparse Subspace Learning: An Iterative Approach to Unsupervised Feature Selection}


\author[uestc,ua]{Nan Zhou}
\ead{{nzhouuestc@gmail.com}}
\author[uw]{Yangyang Xu}
\ead{{yangyang.xu@uwaterloo.ca}}
\author[uestc]{Hong Cheng\corref{cor1}}
\ead{{hcheng@uestc.edu.cn}}
\author[uestc]{Jun Fang}
\ead{{JunFang@uestc.edu.cn}}
\author[ua]{Witold Pedrycz}
\ead{{wpedrycz@ualberta.ca}}

\address[uestc]{Center for Robotics, School of Automation Engineering, University of Electronic Science and Technology of China, Chengdu, Sichuan, 611731, China}
\address[ua]{Department of Electrical and Computer Engineering, University of Alberta, AB, T6G2R3, Canada}
\address[uw]{Department of Combinatorics and Optimization, University of Waterloo, ON, N2L3G1, Canada}

\begin{abstract}
As we aim at alleviating the curse of high-dimensionality, subspace learning is becoming more popular. Existing approaches use either information about global or local structure of the data, and few studies simultaneously focus on global and local structures as the both of them contain important information. In this paper, we propose a global and local structure preserving sparse subspace learning (GLoSS) model for unsupervised feature selection. The model can simultaneously realize feature selection and subspace learning. 
In addition, we develop a greedy algorithm to establish a generic combinatorial model, and an iterative strategy based on an accelerated block coordinate descent is used to solve the GLoSS problem. We also provide whole iterate sequence convergence analysis of the proposed iterative algorithm. Extensive experiments are conducted on real-world datasets to show the superiority of the proposed approach over several state-of-the-art unsupervised feature selection approaches.
\end{abstract}

\begin{keyword}
Machine learning \sep Feature selection \sep Subspace learning \sep Unsupervised learning


\end{keyword}

\end{frontmatter}


\section{Introduction}
With the advances in data processing, the dimensionality of the data increases and can be extremely high in many fields such as computer vision, machine learning and image processing. The high dimensionality of the data not only greatly increases the time and storage space required to realize data analysis but also introduces much redundancy and noise which can decrease the accuracy of ensuing methods. Hence, dimensionality reduction becomes an important and often necessary preprocessing step to accomplish certain machine learning tasks such as clustering and classification.

Generally speaking, dimensionality reduction approaches can be divided into two classes: feature selection and subspace learning. Feature selection methods aim to select a subset of most representative features following a certain criterion (e.g.,\cite{guyon2003introduction,yang2004essence,herman2013mutual,yan2015sparse, zhao2013similarity}) , while subspace learning methods aim to learn a (linear or nonlinear) transformation to map the original high-dimensional data into a lower-dimensional subspace (e.g., \cite{wang2014robust,duda2012pattern,he2005neighborhood,cai2007spectral}). Subspace learning methods, such as principal component analysis (PCA), combine all original features at each dimension of the learned subspace, and this causes some interpretation difficulties. To overcome this difficulty, sparse subspace learning methods (e.g., \cite{zou2006sparse, moghaddam2006generalized, cai2007spectralsparse}) and joint models that simultaneously perform subspace learning and feature selection (e.g., \cite{cai2010unsupervised, gu2011joint, li2012unsupervised, wang2015subspace}) have been developed. 

This paper exhibits the following main contributions: 
\begin{enumerate}
\item We propose a novel \emph{unsupervised sparse subspace learning} model for feature selection. The model simultaneously perserves global and local structures of the data, both of which contain important discriminative information for feature selection, as demonstrated in \cite{du2013local, liu2014global}.
 We derive the model by first relaxing an existing combinatorial model and then adding a group sparsity regularization term. The regularization term controls the row sparsity of the transformation matrix, and since each row of the transformation matrix corresponds to a feature, the proposed model can automatically select representative features and makes easy interpretation. 
\item We, for the first time, propose a greedy algorithm to the original combinatorial optimization problem. In addition, we apply the accelerated block coordinate descent (BCD) method proposed in \cite{xu2013block} to the relaxed continuous but nonconvex problem. The BCD method utilizes the bi-convexity structure of the problem and has been found very efficient for our purposes. 
\item We establish a whole iterate sequence convergence result of the BCD method for our problem under consideration by assuming the existence of a full rank limit point. Because of the peculiarity of the formulated problem, the result is new and not implied by any existing convergence results of BCD. 
\item We conduct extensive experimental studies. The proposed method is tested on six real-world datasets coming from different areas and compared to eight state-of-the-art unsupervised feature selection algorithms. The results demonstrate the superiority of the proposed method over all the other compared methods. In addition, we study the sensitivity of the proposed method to the parameters of the model and observe that it can perform in a stable way within a large range of values of the parameters. 
\end{enumerate}

\subsubsection*{Organization and notation}
The paper is organized as follows. In Sect. \ref{sec.reworks}, we give a brief review of recent related studies on subspace learning. Sect. \ref{sec.LPSSLmodel} reviews two local structure preserving methods and proposes a local structure preserving sparse subspace learning model. In Sect. \ref{sec.algorihm}, we present an algorithm leading to the solution of the proposed model. Convergence results are also shown. Experimental results are reported in Sect. \ref{sec.experiment}. Finally, Sec. \ref{sec.conclusion} concludes this paper.

To facilitate the presentation of the material, we list a notation in Table \ref{tab.notation}.
\begin{table}[htbp]
\centering
\caption{Notation}\label{tab.notation}
\begin{tabular}{ ll }
\toprule
Notation & Description\\
\hline
$n\quad$ & The number of instances\\
$d\quad$ & The number of features\\
$\mathcal{\kappa}\quad$ & The number of selected features\\
$A_{i.}$ & The $i^{-th}$ row of the matrix $A$\\
$K\quad$ & The dimension of subspace\\
$m$ & The number of nearest neighbors\\
$\|W\|_{2,1}\quad$ & $\sum_i \|W_{i.}\|_{2}$, the sum of the $\ell_2$-norm of rows in $W$\\
$\|\mathbf{x}\|_0$ & $\sharp\{x_i\neq 0\}$, the number of nonzero elements in vector $\mathbf{x}$\\
$|\mathcal{I}|$ & cardinality of set $\mathcal{I}$\\
\toprule
\end{tabular}
\end{table} 

\section{Related Studies}\label{sec.reworks}
\subsubsection*{Subspace learning}

One well-known subspace learning method is principal component analysis (PCA) \cite{duda2012pattern,lipovetsky2009pca}. It maximizes the global data structure information in the principal space and thus it becomes optimal in terms of data fitting. Beside global structure, local structure of the data also contains important discriminative information \cite{bottou1992local}, which plays a crucial role in pattern recognition \cite{jiang2011linear}. Many subspace learning methods preserve different local structures of the data for different problems and can yield better performance than the traditional PCA method. These methods usually use the linear extension of graph embedding (LGE) to preserve local structure. With different choices of the graph adjoint matrix, LGE framework leads to different subspace learning methods. The popular ones include Linear Discriminant Analysis (LDA) \cite{duda2012pattern,ching2012regularized}, Locality Preserving Projection (LPP) \cite{he2005laplacian,niyogi2004locality,zhang2010graph} and Neighborhood Preserving Embedding (NPE) \cite{he2005neighborhood}.
One drawback of these locality preservation methods is that they require eigen-decomposition of dense matrices, which can be very expensive in both CPU time and machine storage, especially for problems involving high-dimensional data. To overcome this drawback, Cai \textit{et al.} \cite{cai2007spectral} proposed a Spectral Regression (SR) method to transform the eigen-decomposition problem into a two-step regression problem that becomes easier to solve. 


\subsubsection*{Sparse subspace learning}
Although the subspace learning can transform the original high-dimensional data into a lower-dimensional space, it mingles all features and lacks interpretability. For better interpretability, sparse subspace learning methods have been proposed in the literature by adding certain sparsity regularization terms or sparsity constraints into subspace learning models.
For example, the sparse PCA (SPCA) \cite{zou2006sparse} adds ``Elastic Net'' term into the traditional PCA. 
Moghaddam \textit{et al.} \cite{moghaddam2005spectral} proposed a spectral bounds framework for sparse subspace learning. 
Cai \textit{et al.} \cite{cai2007spectralsparse} proposed a unified sparse subspace learning method based on spectral regression model, which adds an $\ell_1$ regularization term in the regression step. 
Qiao \textit{et al.} \cite{qiao2010sparsity} introduced the Sparsity Preserving Projection (SPP) method for subspace learning, while SPP utilizes the sparsity coefficients to construct the graph Laplacian. 
It is worth mentioning that besides subspace learning, sparsity regularized methods have also been used in many other fields such as computer vision \cite{cheng2009sparsity,chengsparsity}, image processing \cite{fang2015pattern}, and signal recovery \cite{zhou2015bayesian}. 

\subsubsection*{Simultaneous feature selection and subspace learning}
Recently, joint methods have been proposed to simultaneously perform feature selection and subspace learning. The core idea of these methods is to use the transformation matrix to guide feature selection according to the norm of its row/column vectors. Cai \textit{et al.} \cite{cai2010unsupervised} combined the sparse subspace learning with feature selection and proposed the Multi-Cluster Feature Selection (MCFS) method. Because MCFS uses $\ell_1$-term to control the sparsity of the transformation matrix, different dimensions of the learned subspace may combine different features, and thus the model lacks sound interpretability. Gu \textit{et al.} \cite{gu2011joint} improved the MCFS method by using $\ell_{2,1}$-term to enforce the row sparsity of the transformation matrix. This way, the transformation matrix will have zero-rows corresponding to irrelavant features. 
Wang \textit{et al.} \cite{wang2015subspace} proposed an unsupervised feature selection framework, which uses the global regression term for subspace learning and orthogonal transformation matrix for feature selection. In general, the orthogonality constraint may limit its applications, as mentioned in \cite{qian2013robust} in practice, feature weight vectors are not necessarily orthogonal to each other. In addition, the model discussed in  \cite{wang2015subspace} does not utilize local structure of the data. As demonstrated in \cite{bottou1992local}, local structure of the data often contains essential discriminative information.

\subsubsection*{Other related works}
There are some other related methods for subspace learning. 
Provided with only weak label information (e.g., preference relationships between examples), 
Xu \textit{et al.} \cite{xu2014large} proposes  a Weakly Supervised Dimensionality Reduction (WSDR) method, which considers samples' pairwise angles and also distances. For the $K$-means problem, Boutsidis \textit{et al.} \cite{boutsidis2015randomized} proposed randomized feature selection and subspace learning methods and showed that a constant-factor approximation can be guaranteed with respect to the optimal $K$-means objective value. Other popular subspace learning methods include: Nonnegative Matrix Factorization (NMF) \cite{lee2001algorithms,yuan2005projective} that considers subspace learning of nonnegative data; 
joint LDA and $K$-means \cite{ding2007adaptive} that 
combines LDA and $K$-means clustering together for unsupervised subspace learning; Dictionary Learning (DL) \cite{zhang2011linear} that first learns a dictionary via sparse coding and then uses the dictionary to decompose each sample into more discriminative and less discriminative parts for subspace learning. For more subspace learning methods, see \cite{de2003framework} and the references therein.

\section{The Proposed Framework of Local Structure Preserving Sparse Subspace Learning}\label{sec.LPSSLmodel}

In this section, we introduce our feature selection models that encourage global data fitting and also preserve local structure information of the data. The first model is of combinatorial nature, only allowing 0-1 valued variables. The modeling idea is intuitive and inspired from (11) of \cite{wang2015subspace}, but it is not easy to find a good approximate solution to the problem. The second model relaxes the first one and becomes its continuous counterpart. Various optimization methods can be utilized to determine its solution. More importantly, we find that the relaxed model can most times produce better performance than the original one; one can refer to the numerical results reported in Section \ref{sec.experiment}. We want to emphasize again here that our main contributions concern the second model and the algorithm developed for it.

\subsection{A Generic Formulation}\label{sec:basic}
Given $n$ data samples $\{\mathbf{p}_i\}_{i=1}^n$ located in the $d$-dimensional space, the goal of feature selection is to find a small set of features that can capture most useful information of the data which can better serve to solve classification or clustering problems. 
One natural way to measure the information content is to see how close the original data samples are to the learned subspace spanned by the selected features. 
Mathematically, the distance of a vector $\mathbf{x}$ to a subspace $\mathcal{X}$ can be represented as $\|\mathbf{x}-\mathcal{P}_{\mathcal{X}}(\mathbf{x})\|_2$, where $\mathcal{P}_\mathcal{X}$ denotes the projection onto $\mathcal{X}$ and $\|\cdot\|_2$ is the Euclidean 2-norm. Hence, the feature selection problem can be described as follows
\begin{equation}\label{eq:orig-model}
  \begin{split}
  &\min_{W,H} \frac{1}{2}\|X-XWH\|_F^2\\
  &\ \text{s.t.}\quad W\in \{0,1\}^{d\times \kappa},\ W^{\top}\bm{1}_{d\times 1} = \bm{1}_{\kappa\times 1},\\
   &\ \qquad \|W \bm{1}_{\kappa\times 1}\|_{0}=\kappa.
  \end{split}
\end{equation}
where $X=[\mathbf{p}_1,\mathbf{p}_2,\ldots,\mathbf{p}_n]^\top\in \mathbb{R}^{n\times d}$.
Concerning the proposed model, we make a few remarks:
\begin{enumerate}
\item The matrix $W$ is the selection matrix with entries of ``$0$'' or ``1''. The constraint $W^\top\bm{1}_{d\times 1} = \bm{1}_{\kappa\times 1}$ enforces that each column of $W$ has only one ``$1$''. Therefore, at most $\kappa$ features are selected.
\item The constraint $\|W \bm{1}_{\kappa\times 1}\|_{0}=\kappa$ enforces that $W$ has $\kappa$ nonzero rows. No feature will be selected more than once, and thus exactly $\kappa$ features will be chosen.
\item Given $W$, the optimal $H$ produces the coefficients of all $d$ features projected onto the subspace spanned by the selected features. Hence, \eqref{eq:orig-model} expresses the distance of $X$ to the learned subspace.
\end{enumerate}

The recent work \cite{wang2015subspace} mentions to use the 0-1 feature selection matrix, but it does not explicitly formulate an optimization model like \eqref{eq:orig-model}. 
As shown in \cite{bottou1992local}, local structure of the data often contains discriminative information that is important for distinguishing different samples. To make the learned subspace preserve local structure, one can add a regularization term to the objective to promote such structural information, namely, to solve the regularized model
\begin{equation}\label{eqn.basicform}
  \begin{split}
  &\min_{W,H} \frac{1}{2}\|X-XWH\|_F^2+\mu \text{Loc}(W)\\
  &\ \text{s.t.}\quad W\in \{0,1\}^{d\times \kappa},\ W^\top\bm{1}_{d\times 1} = \bm{1}_{\kappa\times 1},\\
   &\ \qquad \|W \bm{1}_{\kappa\times 1}\|_{\ell_0}=\kappa,
  \end{split}
\end{equation}
where $\text{Loc}(W)$ is a local structure promoting regularization term, and $\mu$ is a parameter to balance the data fitting and regularization. In the next subsection, we introduce different forms of $\text{Loc}(W)$.

\subsection{Local Structure Preserving Methods}
Local structure of the data often contains important information that can be used to distinguish the samples \cite{cai2010unsupervised,he2005laplacian}. A predictor utilizing local structure information can be much more efficient than that only using global information \cite{bottou1992local}. Therefore, one may want the learned lower dimensional subspace to be able to preserve local structure of the training data. We briefly review two widely used local structure preserving methods.


\subsubsection{Local Linear Embedding}
The Local Linear Embedding (LLE) \cite{roweis2000nonlinear} method first finds the set $\mathcal{N}_m(\mathbf{p}_j)$ of $m$ nearest neighbors for all $j$ and then constructs the \emph{similarity matrix} $S$ as the (normalized) solution of the following problem 
\begin{equation}\label{eqn.LLE}
  \begin{split}
  \min_S &\ \sum_{i=1}^n \|\mathbf{p}_i-\sum_{j=1}^n S_{ij}\mathbf{p}_j\|_{2}^2,\\
  \text{s.t.} &\ S_{ij}=0,\,\forall j\not\in \mathcal{N}_m(\mathbf{p}_i),\,\forall i.
  \end{split}
\end{equation}
One can regard $S_{ij}$ as the coefficient of the $j^{-th}$ sample when approximating the $i^{-th}$ sample, and the coefficient is \emph{zero} if the $j^{-th}$ sample is not the neighbor of the $i^{-th}$ one.  After obtaining $S$ from \eqref{eqn.LLE}, LLE further normalizes it such that $\sum_{j=1}^nS_{ij}=1$. Then it computes the lower-dimensional representation $Y=W^\top X^\top\in\mathbb{R}^{K \times n}$ through solving the following problem 
\begin{equation}\label{eqn.LLE2}
  \begin{split}
\min_W\  \sum_{i=1}^n\|W^\top\mathbf{p}_i-\sum_{j=1}^nS_{ij}W^\top\mathbf{p}_j\|_{2}^2.
  \end{split}
\end{equation}
Note that if $W$ is a selection matrix defined as (\ref{eqn.basicform}), $W^\top\mathbf{p}_j$ becomes a lower-dimensional sample, keeping the $K$ selected features by $W$ and removing all other features.
Let $L = (I-S)^\top(I-S)$, where $I$ is the $n\times n$ identity matrix. Then it is easy to see that
(\ref{eqn.LLE2}) can be equivalently expressed as
\begin{equation}\label{eq:lle-reg}
  \begin{split}
\min_W\  Tr(W^\top X^\top LXW).
  \end{split}
\end{equation}

\subsubsection{Linear Preserve Projection}\label{sec:lpp}
For the Linear Preserve Projection \cite{niyogi2004locality} (LPP) method, the similarity matrix $S$ is generated by
\begin{equation}
  \begin{split}
S_{ij}=\begin{cases}
\exp (\frac{\|\mathbf{p}_i-\mathbf{p}_j\|^2_{2}}{-2\sigma^2})\quad &\mathbf{p}_i\in \mathcal{N}_m(\mathbf{p}_j)\ \text{or}\ \mathbf{p}_j\in \mathcal{N}_m(\mathbf{p}_i)\\
0&\text{otherwise}
\end{cases},
  \end{split}
\end{equation}
where $\mathcal{N}_m(\mathbf{p}_i)$ is the set of $m$ nearest neighbors of $\mathbf{p}_i$. The LPP method requires the lower-dimensional representation to preserve the similarity of the original data and forms the transformation matrix $W$ by solving the following optimization problem
\begin{equation}\label{eq:lpp1}
  \begin{split}
\min_W\ \sum_{i,j=1}^n S_{ij}\|W^\top\mathbf{p}_i - W^\top \mathbf{p}_j\|_{2}^2.
  \end{split}
\end{equation}
Let $L = D-S$ be the Laplacian matrix, where $D$ is a diagonal matrix, called degree matrix, with diagonal elements $D_{ii}=\sum_{j=1}^n S_{ij},\,\forall i$. Then \eqref{eq:lpp1} can be equivalently expressed as
\begin{equation}\label{eq:lpp-reg}
  \begin{split}
\min_W\ Tr(W^\top X^\top L XW).
  \end{split}
\end{equation}

\subsection{Relaxed Formulation}
The problem \eqref{eqn.basicform} is of combinatorial nature, and we do not have many choices to solve it. In the next section, we develop a greedy algorithm, which chooses $\kappa$ features one by one, with each selection decreasing the objective value the most among all the remaining features. Numerically, we observe that the greedy method can often make satisfactory performance. However, it can sometimes perform very bad; see results on Yale64 and Usps in section \ref{sec.experiment}.  For this reason, we seek an alternative way to select features by first relaxing \eqref{eqn.basicform} to a continuous problem and then employing a reliable optimization method to solve the relaxed problem. As observed in our tests, the relaxed method can perform comparably well with and, most of the time, much better than the original one.


As remarked at the end of Section \ref{sec:basic}, any feasible solution $W$ is nonnegative and has $\kappa$ non-zero rows. If $\kappa\ll d$ (that is usually satisfied), then $W$ has lots of zero rows. Based on these observations, we relax the 0-1 constraint to nonnegativity constraint and the hard constraints $W^T\bm{1}_{d\times 1} = \bm{1}_{\kappa\times 1}, \|W \bm{1}_{\kappa\times 1}\|_{\ell_0}=\kappa$ to $g(W)\le \kappa$, where $g(W)$ measures the row-sparsity of $W$. One choice of $g(W)$ is group Lasso \cite{yuan2006model}, i.e., \begin{equation}\label{eq:gw}
g(W)=\sum_{i=1}^d \|W_{i.}\|_2,
\end{equation}
where $W_{i.}$ denotes the $i$-th row of $W$.
This way, we relax \eqref{eqn.basicform} to
\begin{equation}\label{eqn.L1form}
  \begin{split}
  &\min_{W,H} \frac{1}{2}\|X-XWH\|_F^2+\mu\text{Loc}(W)\\
  &\ \text{s.t.}\quad W\in\mathbb{R}_+^{d\times K},\  g(W)\leq \kappa,
  \end{split}
\end{equation}
or equivalently
\begin{equation}\label{eqn.LPSSL}
  \begin{split}
  &\min_{W,H} \frac{1}{2}\|X-XWH\|_F^2+\mu\text{Loc}(W)+\beta g(W)\\
  &\ \text{s.t.}\quad W\in\mathbb{R}_+^{d\times K},
  \end{split}
\end{equation}
where $\mathbb{R}_+^{d\times K}$ denotes the set of $d\times K$ nonnegative matrices, and $\beta$ is a parameter corresponding to $\kappa$. Note that $W$ now also serves as a transformation matrix of subspace learning, and $K$ is the dimension of the learned subspace. It is not necessary $K=\kappa$. For better approximation by subspace learning, we will choose $K\ge \kappa$. We will focus on \eqref{eqn.LPSSL} because it is easier than \eqref{eqn.L1form} to solve. Practically, one needs to tune the parameters $\mu$, $\beta$, $\kappa$, and $K$. As shown in section \ref{sec.experiment}, the model with a wide range of values of the parameters can give stably satisfactory performance. 

Our model is similar to the Matrix Factorization Feature Selection (MFFS) model proposed in \cite{wang2015subspace}. The difference is that the MFFS model restricts the matrix $W$ to be orthogonal while we use regularization term $g(W)$ to promote row-sparsity of $W$. Although orthogonal $W$ makes their model closer to the original model \eqref{eq:orig-model}, it increases difficulty of solving their problem. In addition, MFFS does not utilize local structure preserving term as we do and thus may lose some important local information. Numerical tests in section \ref{sec.experiment} demonstrate that the proposed model along with an iterative method can produce better results than those obtained by using the MFFS method. 

Before completing this section, let us make some remarks on the relaxed model. Originally, $W$ is restricted to have exactly $\kappa$ non-zeros, so it could be extremely sparse as $\kappa\ll d$, and one may consider to include a sparsity-promoting term (e.g., $\ell_1$-norm) in the objective of \eqref{eqn.LPSSL}. However, doing so is not necessary since both $g(W)$ and the nonnegativity constraint encourage sparsity of $W$, and numerically we notice that $W$ output by our algorithm is indeed very sparse. Another point worth mentioning is that the elements of $W$ given by \eqref{eqn.LPSSL} are real numbers and do not automatically select $\kappa$ features. For the purpose of feature selection, after obtaining a solution $W$, we choose the features corresponding to the $\kappa$ rows of $W$ that have the largest norms because larger values imply more important roles played by the features.

\subsection{Extensions}
In \eqref{eqn.LPSSL}, Frobenius norm is used to measure the data fitting and typically suitable when Gaussian noise is assumed in the data and also commonly used if no priori information is assumed. One can of course use other norm or metric if different priori information is known. For instance, if the data come with outliers, one can employ the Cauchy Regression (CR) \cite{liu2014robustness} instead of the Frobenius norm to improve robustness and modify \eqref{eqn.LPSSL} read as 
\begin{equation}\label{eqn.CRSSL}
  \begin{split}
  &\min_{W,H} \sum_{j=1}^d\sum_{i=1}^n \ln\left[1+\left(\frac{X_{ij}-X_{i.}WH_{.j}}{\sigma}\right)^2\right]+\mu\text{Loc}(W)+\beta g(W)\\
  &\ \text{s.t.}\quad W\geq 0.
  \end{split}
\end{equation}
When the data involves heavy tailed noise, \cite{liu2015performance,guan2012mahnmf} suggest to use the Manhattan distance defined by $\|A\|_M = \sum_{i=1}^n\sum_{j=1}^m |A_{ij}|$, and this way, \eqref{eqn.LPSSL} can be modified to 
\begin{equation}\label{eqn.CRSSL}
  \begin{split}
  &\min_{W,H} \|X-XWH\|_M+\mu\text{Loc}(W)+\beta g(W)\\
  &\ \text{s.t.}\quad W\geq 0.
  \end{split}
\end{equation}

\section{Solving the Proposed Sparse Subspace Learning}\label{sec.algorihm}

In this section, we present algorithms to approximately solve \eqref{eqn.basicform} and \eqref{eqn.LPSSL}. Throughout the rest of the paper, we assume that $\text{Loc}(W)$ takes the function either as \eqref{eq:lle-reg} or \eqref{eq:lpp-reg} and $g(W)$ is given by \eqref{eq:gw}. Due to the combinatorial nature of \eqref{eqn.basicform}, we propose a greedy method to solve it. The problem \eqref{eqn.LPSSL} is smooth, and various optimization methods can be applied. Although its objective is nonconvex jointly with respect to $W$ and $H$, it is convex with regard to one of them while the other one is fixed. Based on this property, we choose the block coordinate descent method to solve \eqref{eqn.LPSSL}.


\subsection{Greedy Strategy for \eqref{eqn.basicform}}

In this subsection, a greedy algorithm is developed for selecting $\kappa$ out of $d$ features based on \eqref{eqn.basicform}. The idea is as follows: each time, we select one from the remaining unselected features such that the objective value is decreased the most. We begin the design of the algorithm  by making the following observation.


\begin{observation}
Let $\mathcal{I}_1$ and $\mathcal{I}_2$ be two index sets of features. Assume $\mathcal{I}_1\subseteq \mathcal{I}_2$, and $X_{\mathcal{I}_1}$ and $X_{\mathcal{I}_2}$ are submatrices of $X$ with columns indexed by  $\mathcal{I}_1$ and $\mathcal{I}_2$ respectively. Then
\begin{equation}\label{eqn.L1model}
  \begin{split}
\min_{H_1}\ \|X-X_{\mathcal{I}_1}H_1\|_F^2\geq \min_{H_2}\ \|X-X_{\mathcal{I}_2}H_2\|_F^2.
  \end{split}
\end{equation}
\end{observation}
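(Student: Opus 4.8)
The plan is to prove the inequality by exhibiting, for any feasible $H_2$ in the right-hand minimization, a corresponding feasible $H_1$ that achieves the same objective value; this immediately yields $\min_{H_1}\le$ any value attained on the right, hence $\min_{H_1}\le \min_{H_2}$. Wait — the stated inequality goes the other way ($\ge$), so I should instead show that \emph{enlarging} the column index set can only help: given any $H_1$, I want an $H_2$ with $\|X-X_{\mathcal{I}_2}H_2\|_F^2 \le \|X-X_{\mathcal{I}_1}H_1\|_F^2$. This is the correct direction, and the construction is transparent.

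Here are the key steps in order. First, since $\mathcal{I}_1\subseteq\mathcal{I}_2$, the column space of $X_{\mathcal{I}_1}$ is contained in the column space of $X_{\mathcal{I}_2}$: every column of $X_{\mathcal{I}_1}$ is a column of $X_{\mathcal{I}_2}$, so $\mathrm{range}(X_{\mathcal{I}_1})\subseteq \mathrm{range}(X_{\mathcal{I}_2})$. Second, fix an optimal $H_1^\star \in \argmin_{H_1}\|X-X_{\mathcal{I}_1}H_1\|_F^2$. Define $H_2$ by placing the rows of $H_1^\star$ in the positions of $\mathcal{I}_2$ corresponding to indices in $\mathcal{I}_1$ and setting the remaining rows of $H_2$ to zero; formally, if $j\in\mathcal{I}_1$ occupies position $k$ in $\mathcal{I}_2$, set $(H_2)_{k\cdot}=(H_1^\star)_{\ell\cdot}$ where $\ell$ is the position of $j$ in $\mathcal{I}_1$, and $(H_2)_{k\cdot}=0$ otherwise. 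Then $X_{\mathcal{I}_2}H_2 = X_{\mathcal{I}_1}H_1^\star$ by direct expansion of the matrix product (the zero rows contribute nothing, and the nonzero rows reproduce the same linear combination of the same columns). Third, conclude
\begin{equation*}
\min_{H_2}\|X-X_{\mathcal{I}_2}H_2\|_F^2 \le \|X-X_{\mathcal{I}_2}H_2\|_F^2 = \|X-X_{\mathcal{I}_1}H_1^\star\|_F^2 = \min_{H_1}\|X-X_{\mathcal{I}_1}H_1\|_F^2,
\end{equation*}
which is exactly \eqref{eqn.L1model}.

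Alternatively, and perhaps even cleaner, one can phrase the whole thing in terms of orthogonal projections: $\min_{H}\|X-X_{\mathcal{I}}H\|_F^2 = \|X - \mathcal{P}_{\mathrm{range}(X_{\mathcal{I}})}(X)\|_F^2$, where the projection acts columnwise on $X$. Since $\mathrm{range}(X_{\mathcal{I}_1})\subseteq \mathrm{range}(X_{\mathcal{I}_2})$, the distance from any vector to the larger subspace is no greater than the distance to the smaller one, so the inequality follows from the monotonicity of distance-to-subspace under subspace inclusion. There is essentially no obstacle here — the only mild care needed is the bookkeeping of which row of $H_2$ corresponds to which column index, and noting that the minima are attained (the objective is a continuous coercive-on-the-relevant-subspace quadratic, or one simply uses the projection characterization which exists unconditionally). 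I would present the projection argument as the main proof and mention the explicit $H_2$ construction as the concrete realization.
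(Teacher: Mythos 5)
Your proof is correct: both the explicit construction (embed $H_1^\star$ into $H_2$ by zero-padding the rows indexed by $\mathcal{I}_2\setminus\mathcal{I}_1$) and the projection-monotonicity argument are valid, and the direction of the inequality is handled properly. The paper states this observation without proof, and your argument is precisely the standard justification it implicitly relies on, so there is nothing to add.
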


From the above observation, if the current index set of selected features is $\mathcal{I}$, the data fitting will become no worse if we enlarge $\mathcal{I}$ by adding more features. Below we describe in details on how to choose such additional features. Assume $X$ is normalized such that
\begin{equation}\label{eq:xnml}
\|\mathbf{x}_j\|_{2}=1,\,j=1,\ldots,d,
\end{equation} where $\mathbf{x}_j$ denotes the $j^{th}$ column of $X$. Let $\mathcal{I}$ be the current index set of selected features. The optimal $H$ to $\min_H\|X-X_{\mathcal{I}}H\|_F$ is given by
\begin{equation}\label{eq:optH}
H^*=(X_\mathcal{I}^\top X_\mathcal{I})^\dagger X_\mathcal{I}^\top X,
\end{equation}
where $^\dagger$ denotes the Moore-Penrose pseudoinverse of a matrix. Now consider to add one more feature into $\mathcal{I}$, say the $j^{th}$ one. Then the lowest data fitting error is
\begin{align*}&\,\min_{\mathbf{h}}\|X-X_\mathcal{I}H^*-\mathbf{x}_j\mathbf{h}\|_F^2\\
=&\, \min_{\mathbf{h}}\|\mathbf{h}\|_F^2-2\langle \mathbf{h}, \mathbf{x}_j^\top(X-X_\mathcal{I}H^*)\rangle +\|X-X_\mathcal{I}H^*\|_F^2\\
=&\, -\|\mathbf{x}_j^\top(X-X_\mathcal{I}H^*)\|_2^2+\|X-X_\mathcal{I}H^*\|_F^2,
\end{align*}
where the last equality is achieved at $\mathbf{h}=\mathbf{x}_j^\top(X-X_\mathcal{I}H^*)$. Hence, we can choose $j$ such that $\|\mathbf{x}_j^\top(X-X_\mathcal{I}H^*)\|_2$ is the largest among all features not in $\mathcal{I}$.


Carrying out a comparison to $\|\mathbf{x}_j^\top(X-X_\mathcal{I}H^*)\|_2$, we find that $\|\mathbf{x}_j^\top(X-X_\mathcal{I}H^*)\|_1$ can serve better. It turns out that the latter is exactly the correlation between $\mathbf{x}_j$ and the residual $X-X_\mathcal{I}H^*$. Denote the correlation between $\mathbf{x}_i$ and $X$ as
\begin{equation*}
  \begin{split}
Cor(\mathbf{x}_i,X) = \sum_{s=1}^d |\mathbf{x}_i^\top\mathbf{x}_s|.
  \end{split}
\end{equation*}
As shown in \cite{tropp2007signal}, if $Cor(\mathbf{x}_i,X)$ is large, then 
the columns of $X$ can be better linearly represented by $\mathbf{x}_i$. To preserve local structure, we need also incorporate $\text{Loc}(W)$. If the set of selected features is $\mathcal{I}$, then 
\begin{equation*}
  \begin{split}
\text{Loc}(W)=Tr(W^\top X^\top LXW) = \sum_{i\in I} \mathbf{x}_i^\top L\mathbf{x}_i.
  \end{split}
\end{equation*}
Assuming $L=D-S$, i.e., using the LPP method in section \ref{sec:lpp} (that is used throughout our tests), we have from \eqref{eq:xnml} that
$$\min_{j\not\in \mathcal{I}} \mathbf{x}_j^\top L \mathbf{x}_j \Leftrightarrow \max_{j\not\in \mathcal{I}} \mathbf{x}_j^\top S \mathbf{x}_j.$$
Therefore, we can enlarge $\mathcal{I}$ by adding one more feature index $j^*$ such that
$$j^*\in\argmax_{j\not\in \mathcal{I}}Cor(\mathbf{x}_j, X-X_\mathcal{I}H^*)+\mathbf{x}_j^\top S\mathbf{x}_j,$$
where $H^*$ is given in \eqref{eq:optH}, and we have set $\mu=1$ in \eqref{eqn.basicform} for simplicity. Algorithm \ref{alg:glpslfs} summarizes our greedy method, and for better balancing the correlation and local structure preserving terms, we normalize both of them in the 5th line of Algorithm \ref{alg:glpslfs}.

\begin{algorithm}\caption{\textbf{G}reedy \textbf{L}ocally \textbf{P}reserved \textbf{S}ubspace \textbf{L}earning (GLPSL)}\label{alg:glpslfs}
\begin{algorithmic}[1]
\STATE \textbf{Input}: Data matrix $X\in\mathbb{R}^{n\times d}$, and the number $\kappa$ of features to be selected.
\STATE \textbf{Output}: Index set of selected features $\mathcal{I}\subseteq \{1,\ldots,d\}$ with $|\mathcal{I}|=\kappa$.
\STATE \textbf{Initialize} residual $R=X$, candidate set $\Omega = \{1,2,\ldots,d\}$, selected set $\mathcal{I} = \emptyset$.
\FOR{$i=1$ to $\kappa$}
\STATE $i \leftarrow \arg\max_{i\in\Omega}  \frac{Cor(\mathbf{x}_i,R)}{\sum_{j\in\Omega} Cor(\mathbf{x}_j,R)} + \frac{\mathbf{x}_i^\top S\mathbf{x}_i}{\sum_{j\in\Omega} \mathbf{x}_j^\top S\bm{x}_i}$.
\STATE $\Omega \leftarrow \Omega\backslash \{i\}$ and $\mathcal{I} = \mathcal{I}\cup \{i\}$.
\STATE $R \leftarrow X- X_\mathcal{I}(X_\mathcal{I}^\top X_\mathcal{I})^{\dagger}X_\mathcal{I}^\top X.$
\ENDFOR
\end{algorithmic}
\end{algorithm}

\subsection{Accelerated block coordinate update method for \eqref{eqn.LPSSL}}

In this subsection, we present an alternative method for feature selection based on \eqref{eqn.LPSSL}. Utilizing bi-convexity of the objective, we employ the accelerated block coordinate update (BCU) method proposed in \cite{xu2013block} to solve \eqref{eqn.LPSSL}. As explained in \cite{xu2013block}, BCU especially fits to solving bi-convex\footnote{More precisely, in \cite{xu2013block}, BCU is proposed to solve multi-convex optimization problems, which includes bi-convex problems as special cases.} optimization problems like \eqref{eqn.LPSSL}. It owns low iteration-complexity as shown in section \ref{sec.complexity} and also guarantees the whole iterate sequence convergence on solving \eqref{eqn.LPSSL} as shown in section \ref{sec.convergence}. The whole iterate sequence convergence is important because otherwise running the algorithm for different numbers of iterations may result in significantly different solutions, which will further affect the clustering or classfication results. Many existing methods such as the multiplicative rule method \cite{lee2001algorithms} only guarantee nonincreasing monotonicity of the objective values or iterate subsequence convergence, and thus our convergence result is much stronger.


Following the framework of BCU, our algorithm is derived by alternatingly updating $W$ and $H$, one at a time while the other one is fixed at its most recent value. Specifically, let
\begin{align}
&f(W,H)=\frac{1}{2}\|X-XWH\|_F^2+ \frac{\mu}{2}Tr(W^\top X^\top L XW),\label{eq:fun-f}\\
&g_\beta(W)=\beta\|W\|_{2,1}.\label{eq:fun-g}
\end{align}
At the $k$-th iteration, we perform the following updates:
\begin{subequations}
\begin{align}
&W^{k+1}=\argmin_{W\ge 0} \langle \nabla_W f(\hat{W}^k,H^k), W-\hat{W}^k\rangle+\frac{L_w^k}{2}\|W-\hat{W}^k\|_F^2+ g_\beta(W),\label{eq:update-w}\\
&H^{k+1}=\argmin_{H} f(W^{k+1},H),\label{eq:update-h}
\end{align}
\end{subequations}
where we take $L_w^k$ as the Lipschitz constant of $\nabla_W f(W,H^k)$ with respect to $W$ and
\begin{equation}\label{eq:ex-w}
\hat{W}^k=W^k+\omega_k(W^k-W^{k-1})
\end{equation}
is an extrapolated point with weight $\omega_k\in[0,1],\,\forall k$.

Note that the $H$-subproblem \eqref{eq:update-h} can be simply reduced to a linear equation and has the closed-form solution:
\begin{equation}\label{eq:sol-h}
H^{k+1} = \left[(W^{k+1})^\top X^\top X(W^{k+1})\right]^{\dagger}(W^{k+1})^\top X^\top X.
\end{equation}
If $H$ is restricted to be nonnegative, in general, \eqref{eq:update-h} does not exhibit a closed-form solution. In this case, one can update $H$ in the same manner as that of $W$, i.e., completing a block proximal-linearization update. In the following, we discuss in details on parameter settings and how to solve $W$-subproblem \eqref{eq:update-w}.

%

\subsubsection{Parameter settings}
By direct computation, it is not difficult to have
\begin{equation}\label{grad-W}\nabla_W f(W,H)=X^\top(XWH-X)H^\top + \mu X^\top LXW.
\end{equation}
For any $\hat{W},\tilde{W}$, we have
\begin{align*}
&\,\|\nabla_W f(\hat{W},H)-\nabla_W f(\tilde{W},H)\|_F\\
=&\,\|X^\top(X\hat{W}H-X)H^\top + \mu X^\top LX\hat{W}-X^\top(X\tilde{W}H-X)H^\top - \mu X^\top LX\tilde{W}\|_F\\
\le &\, \|X^\top (X\hat{W}H-X)H^\top-X^\top(X\tilde{W}H-X)H^\top\|_F+\|\mu X^\top LX\hat{W}-\mu X^\top LX\tilde{W}\|_F\\
= & \,\|X^\top X(\hat{W}-\tilde{W})HH^\top\|_F+\mu \|X^\top LX(\hat{W}-\tilde{W})\|_F\\
\le &\, \left(\|X^\top X\|_2\|HH^\top\|_2+\mu\|X^\top LX\|_2\right)\|\hat{W}-\tilde{W}\|_F,
\end{align*}
where $\|A\|_2$ denotes the spectral norm and equals the largest singular value of $A$, the first inequality follows from the triangle inequality, and the last inequality is from the fact $\|AB\|_F\le\|A\|_2\|B\|_F$ for any matrices $A$ and $B$ of appropriate sizes. Hence, $\|X^\top X\|_2\|HH^\top\|_2+\mu\|X^\top LX\|_2$ is a Lipschitz constant of $\nabla_W f(W,H)$ with respect to $W$, and in \eqref{eq:update-w}, we set
\begin{equation}
L_w^{k} = \|H^{k}(H^{k})^\top\|_2\|X^\top X\|_2+\mu\|X^\top LX\|_2. \label{eqn.LW}
\end{equation}

As suggested in \cite{xu2013block}, we set the extrapolation weight as
\begin{align}
\omega_k = \min\left(\hat{\omega}_{k}, \delta_{\omega} \sqrt{\frac{L_w^{k-1}}{L_w^{k}}}\right), \label{eqn.Wextraweight}
\end{align}
where $\delta_{\omega}<1$ is predetermined and $\hat{\omega}_{k}= \frac{t_{k-1}-1}{t_k}$ with
$$
t_0=1,\quad t_k=\frac{1}{2}\left(1+\sqrt{1+4t_{k-1}^2}\right).
$$
The weight $\hat{w}_k$ has been used to accelerate proximal gradient method for convex optimization problem (cf. \cite{beck2009fast}). It is demonstrated in \cite{xu2014ecyclic, xu2015apg-ntd} that the extrapolation weight in \eqref{eqn.Wextraweight} can significantly accelerate BCU for nonconvex problems.

\begin{algorithm}\caption{Proximal operator for nonnegative group Lasso: $W=$ Prox-NGL($Y,\lambda$)}\label{alg:prox-ngl}
\begin{algorithmic}
\FOR{$i = 1,\ldots,d$}
\STATE Let $\mathbf{y}$ be the $i^{th}$ row of $Y$ and $\mathcal{I}$ the index set of positive components of $\mathbf{y}$
\STATE Set $\mathbf{w}$ to a zero vector
\IF{$\|\mathbf{y}_\mathcal{I}\|_2>\lambda$}
\STATE Let $\mathbf{w}_\mathcal{I}=(\|\mathbf{y}_\mathcal{I}\|_2-\lambda)\frac{\mathbf{y}_\mathcal{I}}{\|\mathbf{y}_I\|_2}$
\ENDIF
\STATE Set the $i^{th}$ row of $W$ to $\mathbf{w}$
\ENDFOR
\end{algorithmic}
\end{algorithm}

\subsubsection{Solution of $W$-subproblem}

Note that \eqref{eq:update-w} can be equivalently written as
$$\min_{W\ge 0}\frac{1}{2}\left\|W-\left(\hat{W}^k-\frac{1}{L_w^k}\nabla_W f(\hat{W}^k,H^k)\right)\right\|_F^2+\frac{1}{L_w^k}g_\beta(W),$$
which can be decomposed into $d$ smaller independent problems, each one involving one row of $W$ and coming in the form
\begin{equation}\label{eq:small-prob}
\min_{\mathbf{x}\ge 0}\frac{1}{2}\|\mathbf{x}-\mathbf{y}\|_2^2+\lambda\|\mathbf{x}\|_2.
\end{equation}
We show that \eqref{eq:small-prob} has a closed-form solution and thus \eqref{eq:update-w} can be solved explicitly.

\begin{thm}\label{thm:nonneg-l2}
Given $\mathbf{y}$, let $\mathcal{I}=\{i:\,y_i>0\}$ be the index set of positive components of $\mathbf{y}$. Then the solution $\mathbf{x}^*$ of \eqref{eq:small-prob} is given as follows
\begin{enumerate}
\item For any $i\not\in \mathcal{I}$, $x_i^*=0$;
\item If $\|\mathbf{y}_\mathcal{I}\|_2\le \lambda$, then $\mathbf{x}^*_\mathcal{I}=0$; otherwise,
$\mathbf{x}^*_\mathcal{I}=(\|\mathbf{y}_\mathcal{I}\|_2-\lambda)\frac{\mathbf{y}_\mathcal{I}}{\|\mathbf{y}_\mathcal{I}\|_2}$.
\end{enumerate}
\end{thm}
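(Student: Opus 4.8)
The plan is to solve the constrained minimization \eqref{eq:small-prob} by exploiting its separability across coordinates and the rotational symmetry of the $\ell_2$-norm. First I would observe that the objective $\frac12\|\mathbf{x}-\mathbf{y}\|_2^2+\lambda\|\mathbf{x}\|_2$ is strictly convex (the quadratic term is strictly convex), so \eqref{eq:small-prob} has a unique minimizer $\mathbf{x}^*$; this justifies speaking of ``the'' solution. Next I would dispense with the coordinates outside $\mathcal{I}$: for any index $i$ with $y_i\le 0$, the constraint $x_i\ge 0$ together with $-y_i\ge 0$ forces $\tfrac12(x_i-y_i)^2\ge\tfrac12 y_i^2$ with equality only at $x_i=0$, and since the term $\lambda\|\mathbf{x}\|_2$ is nondecreasing in $|x_i|$, setting $x_i^*=0$ can only help. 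Hence $x_i^*=0$ for $i\notin\mathcal{I}$, and the problem reduces to minimizing $\frac12\|\mathbf{x}_\mathcal{I}-\mathbf{y}_\mathcal{I}\|_2^2+\lambda\|\mathbf{x}_\mathcal{I}\|_2$ over $\mathbf{x}_\mathcal{I}\ge 0$ with $\mathbf{y}_\mathcal{I}>0$ (componentwise).

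For the reduced problem I would argue that the nonnegativity constraint is inactive, i.e.\ the unconstrained group-soft-thresholding minimizer already lies in the nonnegative orthant, so it coincides with the constrained one. One clean way: let $\tilde{\mathbf{x}}$ be the minimizer of $\frac12\|\mathbf{x}_\mathcal{I}-\mathbf{y}_\mathcal{I}\|_2^2+\lambda\|\mathbf{x}_\mathcal{I}\|_2$ without any sign constraint; the standard group-Lasso proximal formula gives $\tilde{\mathbf{x}}=\max(0,\,1-\lambda/\|\mathbf{y}_\mathcal{I}\|_2)\,\mathbf{y}_\mathcal{I}$ (with the convention $\tilde{\mathbf{x}}=0$ when $\mathbf{y}_\mathcal{I}=0$, which does not occur here). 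Since $\mathbf{y}_\mathcal{I}>0$ and the scalar factor is nonnegative, $\tilde{\mathbf{x}}\ge 0$, so $\tilde{\mathbf{x}}$ is feasible for the constrained problem and therefore equals $\mathbf{x}^*_\mathcal{I}$. Reading off the two cases $\|\mathbf{y}_\mathcal{I}\|_2\le\lambda$ and $\|\mathbf{y}_\mathcal{I}\|_2>\lambda$ then yields the claimed formulas. For readers who prefer a self-contained derivation of the group-soft-thresholding step, I would instead write the first-order optimality condition: at the minimizer $\mathbf{x}^*_\mathcal{I}\ne 0$ we need $\mathbf{x}^*_\mathcal{I}-\mathbf{y}_\mathcal{I}+\lambda\mathbf{x}^*_\mathcal{I}/\|\mathbf{x}^*_\mathcal{I}\|_2=0$, which forces $\mathbf{x}^*_\mathcal{I}$ to be a positive multiple of $\mathbf{y}_\mathcal{I}$, say $\mathbf{x}^*_\mathcal{I}=t\,\mathbf{y}_\mathcal{I}$ with $t>0$; substituting gives $t(1+\lambda/(t\|\mathbf{y}_\mathcal{I}\|_2))=1$, i.e.\ $t=1-\lambda/\|\mathbf{y}_\mathcal{I}\|_2$, which is positive exactly when $\|\mathbf{y}_\mathcal{I}\|_2>\lambda$; when $\|\mathbf{y}_\mathcal{I}\|_2\le\lambda$ one checks directly that $\mathbf{x}^*_\mathcal{I}=0$ satisfies the subgradient inclusion $0\in -\mathbf{y}_\mathcal{I}+\lambda\partial\|\cdot\|_2(0)+N_{\ge 0}$, since $\mathbf{y}_\mathcal{I}\in\lambda B_2$.

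The only genuinely delicate point is the reduction of the coordinates outside $\mathcal{I}$ when $y_i=0$: there the argument ``$x_i^*=0$ helps'' needs a word of care because $\tfrac12(x_i-0)^2$ is minimized at $x_i=0$ and $\lambda\|\mathbf{x}\|_2$ is also nondecreasing in $|x_i|$, so both terms are simultaneously minimized at $x_i=0$ — this is fine, but it is worth stating explicitly rather than glossing over. I would organize the write-up as: (i) existence/uniqueness; (ii) the coordinate-elimination lemma giving $x_i^*=0$ for $i\notin\mathcal{I}$; (iii) the reduced positive-orthant problem and the observation that its unconstrained optimum is already nonnegative; (iv) the explicit group-soft-thresholding formula and case split. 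I expect step (iii)–(iv), the verification that the unconstrained group-Lasso prox lands in the nonnegative orthant, to be the crux, though it is short once the positivity of $\mathbf{y}_\mathcal{I}$ is in hand.
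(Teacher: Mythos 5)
Your proposal is correct and follows essentially the same route as the paper's proof: set $x_i^*=0$ for $i\notin\mathcal{I}$ by a monotonicity argument, reduce to the subvector indexed by $\mathcal{I}$, and observe that the unconstrained group-soft-thresholding minimizer is already nonnegative (since $\mathbf{y}_\mathcal{I}>0$), hence solves the constrained reduced problem. The only difference is cosmetic: you derive the thresholding formula from the first-order conditions, whereas the paper simply cites it from the proximal-operator literature.
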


\begin{proof}
For $i\not\in \mathcal{I}$, we must have $x_i^*=0$ because if $x_i^*>0$, setting the $i^{th}$ component to \emph{zero} and keeping all others unchanged will simultaneously decrease $(x_i-y_i)^2$ and $\|\mathbf{x}\|_2$. Hence, we can reduce \eqref{eq:small-prob} to the following form
\begin{equation}\label{eq:red-prob}
\min_{\mathbf{x}_\mathcal{I}\ge 0} \frac{1}{2}\|\mathbf{x}_\mathcal{I}-\mathbf{y}_\mathcal{I}\|_2^2+\lambda\|\mathbf{x}_\mathcal{I}\|_2.
\end{equation}
Without nonnegativity constraint on $\mathbf{x}_\mathcal{I}$, the minimizer of \eqref{eq:red-prob} is given by item 2 of Theorem \ref{thm:nonneg-l2} (for example, see \cite{parikh2013proximal}). Note that the given $\mathbf{x}_\mathcal{I}^*$ is nonnegative. Hence, it solves \eqref{eq:red-prob}, and this completes the proof.
\end{proof}

The above proof gives a way to find the solution of \eqref{eq:small-prob}. Using this method, we can explicitly form the solution of \eqref{eq:update-w} by the subroutine Prox-NGL in Algorithm \ref{alg:prox-ngl}, where $Y\in \mathbb{R}^{d\times K}$ and $\lambda>0$ are inputs, and $W$ is the output. Arranging the above discusstion together, we have the pseudocode in Algorithm \ref{alg:lpsslfs} for solving \eqref{eqn.LPSSL}.

\begin{algorithm}\caption{\textbf{G}lobal and \textbf{Lo}cal Structure Preserving \textbf{S}parse \textbf{S}ubspace Learning (GLoSS)}\label{alg:lpsslfs}
\begin{algorithmic}[1]
\STATE \textbf{Input}: Data matrix $X\in\mathbb{R}^{n\times d}$, the number of selected features $\kappa$ and parameter $\beta, \mu$.
\STATE \textbf{Output}: Index set of selected features $\mathcal{I}\subseteq \{1,\ldots,d\}$ with $|\mathcal{I}|=\kappa$
\STATE \textbf{Initialize} $W^0\in\mathbb{R}_+^{d\times K}$, $H^0\in\mathbb{R}^{K\times d}$, choose a positive number $\delta_{\omega}<1$; set $k=1$.
\WHILE{Not convergent}
\STATE Compute $L_w^{k}$ and $\omega_{k}$ according to (\ref{eqn.LW}) and (\ref{eqn.Wextraweight}) respectively.
\STATE Let $\hat{W}^{k}=W^{k}+\omega_{k}(W^{k}-W^{k-1})$.
\STATE Update $W^{k+1}\leftarrow\text{Prox-NGL}(\hat{W}^{k}-\frac{1}{L_w^k}\nabla_W f(\hat{W}^{k},H^k), \frac{\beta}{L_w^k})$.
\STATE Update $H^{k+1} \leftarrow \eqref{eq:sol-h}.$
\IF{$F(W^{k+1},H^{k+1})\geq F(W^{k},H^{k})$}
\STATE Set $\hat{W}^{k}=W^{k}$.
\ELSE
\STATE Let $k\leftarrow k+1$.
\ENDIF
\ENDWHILE
\STATE Normalize each column of $W$.
\STATE Sort $\|W_{i.}\|_{2},\ i=1,\ldots,d$ and select features corresponding to the $\kappa$ largest ones.
\end{algorithmic}
\end{algorithm}

\subsection{Complexity Analysis}\label{sec.complexity}
In this section, we count the flops per iteration of Algorithm \ref{alg:lpsslfs}. Our analysis is for general case, namely, we do not assume any structure of $X$. Note that if $X$ is sparse, the computational complexity will be lower. The main cost of our algorithm is in the update of $W$ and $H$, i.e., the $7^{th}$ and $8^{th}$ lines in Algorithm \ref{alg:lpsslfs}. For updating $W$, the major cost is in the computation of $\nabla_W f(W,H)$. Assume the dimension of subspace $K< \min(d,n)$. Then from \eqref{grad-W}, we can obtain the partial gradient by first computing $XW$, $HH^\top$ and $XH^\top$, then $XW(HH^\top)$ and $\mu L(XW)$, and finally left multiplying $X^\top$ to $XW(HH^\top)-XH^\top+\mu L(XW)$. This way, it takes about $3ndK+dK^2+nK^2+n^2K$ flops. To update $H$ by \eqref{eq:sol-h}, we can use the same trick and obtain $H$ in about $2ndK+nK^2+dK^2+K^3$ flops. Note that with $XW$ and $LXW$ pre-computed, the objective value required in $9^{th}$ line can be easily obtained in about $nd$ flops. Therefore, we have the per-iteration computational complexity of order $\mathcal{O}(ndK+n^2K)$ since $K< \min(d,n)$, and if $K=\mathcal{O}(1)$, then the algorithm is scalable to data size.



\subsection{Convergence analysis}\label{sec.convergence}

In this section, we analyze the convergence of Algorithm GLoSS. Let us denote $$\iota_+(W)=\left\{\begin{array}{ll}
0,&\text{ if }W\ge 0,\\
+\infty, &\text{ otherwise}
\end{array}\right.$$
to be the indicator function of the nonnegative quadrant. Also, let us denote
$$F(W,H)=f(W,H)+g_\beta(W)+\iota_+(W).$$
Then the problem \eqref{eqn.LPSSL} is equivalent to
$$\min_{W, H} F(W,H),$$
and the first-order optimality condition is $0\in\partial F(W,H)$. Here, $\partial F$ denotes the subdifferential of $F$ (see \cite{rockafellar2009variational} for example) and equals $\nabla F$ if $F$ is differentiable and a set otherwise. By Proposition 2.1 of \cite{attouch2010proximal}, $0\in\partial F(W,H)$ is equivalent to
$$0\in \partial_W F(W,H), \text{ and } 0=\nabla_H F(W,H)$$
namely,
\begin{subequations}\label{eq:opt-cond}
\begin{align}
&0\in\nabla_W f(W,H)+\partial g_\beta(W)+\partial\iota_+(W),\label{eq:opt-cond-w}\\
&0=\nabla_Hf(W,H).\label{eq:opt-cond-h}
\end{align}
\end{subequations}
We call $(W,H)$ a critical point of \eqref{eqn.LPSSL} if it satisfies \eqref{eq:opt-cond}.

In the following, we first establish a subsequence convergence result, stating that any limit point of the iterates is a critical point. Assuming existence of a full rank limit point, we further show that the whole iterate sequence converges to a critical point. The proofs of both results involve many technical details and thus are deferred to the appendix for the readers' convenience.

 \begin{thm}[Iterate subsequence convergence]\label{thm:subcvg}
Let $\{(W^k,H^k)\}_{k=1}^\infty$ be the sequence generated from Algorithm \ref{alg:lpsslfs}. Any finite limit point of $\{(W^k,H^k)\}_{k=1}^\infty$ is a critical point of \eqref{eqn.LPSSL}.
\end{thm}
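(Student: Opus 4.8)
The plan is to treat Algorithm \ref{alg:lpsslfs} as an instance of the accelerated block coordinate update (BCU) framework of \cite{xu2013block} applied to $\min_{W,H} F(W,H)$, with two blocks and a proximal-linear update on $W$ together with an exact minimization on $H$. First I would record the basic progress inequality. For the $W$-block, because $L_w^k$ in \eqref{eqn.LW} dominates the Lipschitz constant of $\nabla_W f(\cdot,H^k)$ and \eqref{eq:update-w} is a proximal-linear step around the extrapolated point $\hat W^k$, a standard sufficient-decrease argument (the usual ``descent lemma plus optimality of the prox subproblem'') yields, after the safeguard/restart in lines 9--12 has been triggered when needed, a bound of the form
\begin{equation}\label{eq:plan-desc}
F(W^{k+1},H^{k+1})\le F(W^k,H^k)-\frac{c}{2}\|W^{k+1}-W^k\|_F^2
\end{equation}
for some constant $c>0$; here I use that the exact $H$-update \eqref{eq:update-h}--\eqref{eq:sol-h} can only further decrease $F$, and that the extrapolation weight satisfies $\omega_k\in[0,1]$ with the monotonicity-guarding ``if'' block ensuring \eqref{eq:plan-desc} actually holds with $\hat W^k$ replaced by $W^k$ on any non-improving step. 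Summing \eqref{eq:plan-desc} over $k$ and using that $F$ is bounded below (it is nonnegative) gives $\sum_k\|W^{k+1}-W^k\|_F^2<\infty$, hence $\|W^{k+1}-W^k\|_F\to0$, and then $\|\hat W^k-W^k\|_F=\omega_k\|W^k-W^{k-1}\|_F\to0$ as well.

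Next I would pass to a limit point. Let $(W^*,H^*)$ be a finite limit point along a subsequence $\{(W^{k_j},H^{k_j})\}$. From the closed form \eqref{eq:sol-h} one sees $H^{k_j}$ is a continuous function of $W^{k_j}$ on the region where $(W^{k_j})^\top X^\top X W^{k_j}$ has locally constant rank; more robustly, I would argue that along the subsequence $H^{k_j}\to H^*$ directly from boundedness (the iterates live in a bounded set once \eqref{eq:plan-desc} is in force and $g_\beta$ is coercive in $W$) after possibly refining the subsequence, and that \eqref{eq:opt-cond-h} holds at $(W^*,H^*)$ by taking limits in the optimality condition $\nabla_H f(W^{k_j+1},H^{k_j+1})=0$, which is exactly how \eqref{eq:sol-h} is derived. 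For the $W$-condition, I would write down the optimality of the prox step \eqref{eq:update-w}:
\begin{equation}\label{eq:plan-optw}
0\in\nabla_W f(\hat W^{k},H^{k})+L_w^k(W^{k+1}-\hat W^k)+\partial g_\beta(W^{k+1})+\partial\iota_+(W^{k+1}),
\end{equation}
and then take $k=k_j\to\infty$. Using $W^{k_j+1}\to W^*$ (a consequence of $\|W^{k+1}-W^k\|_F\to0$), $\hat W^{k_j}\to W^*$, $H^{k_j}\to H^*$, continuity of $\nabla_W f$, boundedness of $L_w^k$ (since $H^k$ is bounded, \eqref{eqn.LW} is bounded), and the closedness of the subdifferential graph of the proper lsc function $g_\beta+\iota_+$, the limit of \eqref{eq:plan-optw} gives \eqref{eq:opt-cond-w}. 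Together with \eqref{eq:opt-cond-h} this shows $(W^*,H^*)$ is a critical point.

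The main obstacle I anticipate is the interplay between the extrapolation and the monotonicity safeguard: one must be careful that the iterate counter $k$ is only advanced on improving steps (lines 10--12), so the sequence actually analyzed is the subsequence of ``accepted'' iterates, and one has to check that \eqref{eq:plan-desc} holds uniformly for these and that infinitely many steps are accepted (otherwise the algorithm stalls, which one rules out by noting a rejected step resets $\hat W^k=W^k$, turning the next step into a plain proximal-gradient step that must strictly decrease $F$ unless $W^k$ already satisfies \eqref{eq:opt-cond-w}). A secondary technical point is justifying $H^{k_j}\to H^*$ when $X^\top X$ restricted to the column space of $W^{k_j}$ is near-singular; here the pseudoinverse is not continuous, so I would instead avoid claiming convergence of $H^{k_j}$ to the specific pseudoinverse formula and argue directly from $\nabla_H f(W^{k_j},H^{k_j})=0$ plus boundedness that every limit point $H^*$ of $\{H^{k_j}\}$ satisfies $\nabla_H f(W^*,H^*)=0$, which is all that \eqref{eq:opt-cond-h} requires. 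The remaining steps are routine once these points are handled, and the full-rank hypothesis is not needed here — it will only enter the whole-sequence convergence theorem that follows.
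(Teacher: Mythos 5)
Your proposal is correct and follows the same skeleton as the paper's proof (per-iteration sufficient decrease, summing to force $\|W^{k+1}-W^k\|_F\to 0$, then passing to the limit in the two block optimality conditions along the subsequence defining the limit point), but it differs in two genuine ways. First, the paper simply assumes $\omega_k\equiv 0$ and invokes Lemma 2.1 of \cite{xu2013block} to obtain \eqref{eq:diff-w} with the uniform constant $L_\mu$ of \eqref{eq:lip-mu}; you keep the extrapolation and lean on the monotonicity safeguard in lines 9--12 of Algorithm \ref{alg:lpsslfs}, including the observation that a rejected step reduces to a plain proximal-gradient step that strictly decreases $F$ unless the current $W$ already satisfies \eqref{eq:opt-cond-w} --- this is closer to the algorithm as actually stated (just note that your unspecified constant $c$ must be the uniform lower bound $L_w^k\ge\mu\|X^\top LX\|_2$, i.e.\ exactly the paper's $L_\mu$). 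Second, and more substantively, for the $H$-block the paper goes through the projector identity \eqref{eqh1}, sums the decreases \eqref{eq:diff-h}, and left-multiplies by $(XW^k)^\top$ to reach \eqref{eq:lim-h2}; you instead observe that the pseudoinverse formula \eqref{eq:sol-h} satisfies the normal equations exactly, so $\nabla_H f(W^k,H^k)=0$ holds at every iterate and \eqref{eq:opt-cond-h} follows at the limit by plain continuity --- a simpler and perfectly valid route, and your refusal to invoke continuity of the pseudoinverse is exactly the right instinct. One small correction: coercivity of $g_\beta$ bounds only $\{W^k\}$, not $\{H^k\}$ (the paper makes this point just before Theorem \ref{thm:glbcvg}), so you cannot obtain $H^{k_j}\to H^*$ from a boundedness argument; none is needed, however, because the hypothesis of a finite limit point of the joint sequence already supplies $H^{k_j}\to H^*$ along the defining subsequence, which together with boundedness of $L_w^{k_j}$ along that subsequence is all your limit passage in the prox-step inclusion requires.
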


Due to the coercivity of $g(W)$ and the nonincreasing monotonicity of the objective value, $\{W^k\}$ must be bounded. However, in general, we cannot guarantee the boundedness of $\{H^k\}$ because $XW^k$ may be rank-degenerate (i.e., not full rank). As shown in the next theorem, if we have rank-nondegeneracy of $XW^k$ in the limit, a stronger convergence result can be established. The nondegeneracy assumption is similar to that assumed in \cite[section 7.3.2]{GolubVanLoan1996} and \cite{convergence-HOOI} for (higher-order) orthogonal iteration methods.

\begin{thm}[Whole iterate sequence convergence]\label{thm:glbcvg}
Let $\{(W^k,H^k)\}_{k=1}^\infty$ be the sequence generated from Algorithm \ref{alg:lpsslfs}. If there is a finite limit point $(\bar{W},\bar{H})$ such that $X\bar{W}$ is full-rank, then the whole sequence $\{(W^k,H^k)\}_{k=1}^\infty$ must converge to $(\bar{W},\bar{H})$.
\end{thm}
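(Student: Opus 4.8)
The plan is to apply the now-standard Kurdyka–Łojasiewicz (KL) framework for block-coordinate/proximal-type methods (à la Attouch–Bolte–Svaiter, and the version in \cite{xu2013block}) to the objective $F(W,H)=f(W,H)+g_\beta(W)+\iota_+(W)$. Such a framework typically delivers whole-sequence convergence once three ingredients are in place: (i) a \emph{sufficient decrease} property of the form $F(W^k,H^k)-F(W^{k+1},H^{k+1})\ge c\,\|(W^{k+1},H^{k+1})-(W^k,H^k)\|_F^2$ along the accepted iterates; (ii) a \emph{relative-error} (subgradient) bound, i.e. there exists $b>0$ and an element of $\partial F(W^{k+1},H^{k+1})$ whose norm is at most $b\,\big(\|W^{k+1}-W^k\|_F+\|W^k-W^{k-1}\|_F+\|H^{k+1}-H^k\|_F\big)$; and (iii) the KL property of $F$ at the limit point $(\bar W,\bar H)$. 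Given these, together with the already-established subsequence convergence (Theorem \ref{thm:subcvg}), boundedness of $\{W^k\}$ (from coercivity of $g$ and monotone decrease), and—crucially—boundedness of $\{H^k\}$ near $(\bar W,\bar H)$, the standard argument shows $\sum_k \|(W^{k+1},H^{k+1})-(W^k,H^k)\|_F<\infty$, hence $\{(W^k,H^k)\}$ is Cauchy and converges; since it has $(\bar W,\bar H)$ as a limit point, the whole sequence converges to it.

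The steps I would carry out, in order. First, I would record that $F$ is a semi-algebraic function: $f$ is polynomial, $g_\beta=\beta\|\cdot\|_{2,1}$ is semi-algebraic, and $\iota_+$ is the indicator of a polyhedron; a finite sum of semi-algebraic functions is semi-algebraic, so $F$ satisfies the KL inequality at every point, in particular at $(\bar W,\bar H)$. Second, I would establish the sufficient-decrease inequality for the accepted steps: the $W$-update \eqref{eq:update-w} is a prox-linear step with step size $1/L_w^k$ at an \emph{extrapolated} point, and $H$-update is an exact minimization; using the Lipschitz bound \eqref{eqn.LW} for $\nabla_W f(\cdot,H^k)$ and convexity of $f(W^{k+1},\cdot)$ one gets descent up to the extrapolation term $\omega_k$, and the restart test in lines 9–10 of Algorithm \ref{alg:lpsslfs} (reset $\hat W^k=W^k$, i.e. $\omega_k=0$, whenever the objective would not decrease) guarantees that only genuinely decreasing iterates are kept, yielding (i) with some $c>0$ (using $\delta_\omega<1$ and the uniform bound $\mu\|X^\top LX\|_2\le L_w^k$). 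Third, I would derive the relative-error bound (ii): write the optimality condition of the prox step \eqref{eq:update-w} to produce a subgradient element $L_w^k(\hat W^k-W^{k+1})+\nabla_W f(W^{k+1},H^k)-\nabla_W f(\hat W^k,H^k)\in \nabla_W f(W^{k+1},H^{k+1})+\partial g_\beta(W^{k+1})+\partial\iota_+(W^{k+1})$ plus the $H$-stationarity $\nabla_H f(W^{k+1},H^{k+1})=0$; then bound its norm using Lipschitz continuity of the gradients, the extrapolation formula \eqref{eq:ex-w}, and—this is where $H$ enters—local Lipschitz continuity of $(W,H)\mapsto\nabla_W f(W,H)$ in $H$, which requires $\{H^k\}$ to be bounded.

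The main obstacle is precisely the boundedness of $\{H^k\}$, and this is exactly where the \emph{full-rank hypothesis} on $X\bar W$ is used (the paragraph before the theorem already flags that without it $XW^k$ may be rank-degenerate and $\{H^k\}$ need not be bounded). My plan here is local: since $X\bar W$ is full column rank and $W^{k_j}\to\bar W$ along a subsequence, the smallest singular value of $X W^k$ stays bounded below by some $\sigma_{\min}>0$ for all $k$ in a neighborhood of $\bar W$; combined with the closed form \eqref{eq:sol-h}, $\|H^{k+1}\|_F=\|(W^{k+1\top}X^\top X W^{k+1})^\dagger (W^{k+1})^\top X^\top X\|_F$ is then uniformly bounded, and in fact $H$ depends continuously (even locally Lipschitz) on $W$ through \eqref{eq:sol-h} in this regime. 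One must argue that once the iterates enter such a neighborhood of $(\bar W,\bar H)$ they do not leave it; this is handled inside the KL argument itself (the standard ``trap'' argument: choosing the neighborhood small and using the summability estimate to show the iterates cannot escape before the tail sum is exhausted), using that $F(W^k,H^k)\downarrow F(\bar W,\bar H)$. Once $\{H^k\}$ is controlled on this neighborhood, ingredients (i)–(iii) are all in force and the KL machinery closes the proof: $\{(W^k,H^k)\}$ converges, and by Theorem \ref{thm:subcvg} the limit is the critical point $(\bar W,\bar H)$.
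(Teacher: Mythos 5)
Your proposal follows essentially the same route as the paper's proof: the Kurdyka--{\L}ojasiewicz framework applied to the semi-algebraic $F$, sufficient decrease from the prox-linear $W$-step and exact $H$-minimization, a relative-error subgradient bound, and a local ``trap'' induction in a ball around $(\bar W,\bar H)$ where the full-rank hypothesis keeps $\sigma_{\min}(XW)$ bounded below, yielding summability of the increments and hence a Cauchy sequence. The only difference is one of emphasis: the paper uses the local singular-value bound chiefly to obtain the sufficient-decrease constant $\sigma_{\min}^2(X\bar W)/8$ for the $H$-block (and the constant $C_1$ in the error bound via $\|\bar H\bar H^\top\|_2$), rather than arguing boundedness of $\{H^k\}$ as a separate step, but this is the same underlying use of the assumption.
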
 

\section{Experimental Studies}\label{sec.experiment}
In this section, the proposed methods GLPSL (Algorithm \ref{alg:glpslfs}) and GLoSS (Algorithm \ref{alg:lpsslfs}) are tested on six benchmark datasets and compared to one widely used subspace learning method PCA and seven state-of-the-art unsupervised feature selection methods.

\subsection{Datasets}

The six benchmark datasets we use come from different areas, and their characteristics are listed in Table \ref{tab.datasets}. Yale64, WarpPIE, Orl64 and Orlraws\footnote{http://featureselection.asu.edu/datasets.php}  are face images, each sample of the datasets representing a face image. Usps\footnotemark[3] is a handwritten digit dataset that contains 9,298 handwritten digit images. 
Isolet\footnotemark[3]\footnotetext[3]{http://www.cad.zju.edu.cn/home/dengcai/Data/data.html} is a speech signal dataset containing 30 speakers' speech signal of alphabet twice. All datasets are normalized such that the vector corresponding to each feature has unit $\ell_2$-norm.
\begin{table}[htp]
\centering
\footnotesize
\caption{The datasets detail}
\label{tab.datasets}
\begin{tabular}{ccccc}
  \hline
  Dataset & $\sharp$ Instances & $\sharp$ Features & $\sharp$ Classes & Type of Data\\
  \hline
  Yale64 &165 &4096 & 15 & Face image\\
  WarpPIE & 210 & 2420 &10 & Face image\\
  Orl64 &400 &4096 &50 & Face image\\
  Orlraws &100 &10304 &10 & Face image\\
  Usps &9298 &256 &10 & Digit image\\
  Isolet &1560 &617 &26 & Speech signal\\
  \hline
\end{tabular}
\end{table}

\subsection{Experimental Settings}
Our algorithms are compared to the following methods:
\begin{enumerate}
\item \textit{PCA}: Principal component analysis (PCA) \cite{duda2012pattern} is an unsupervised subspace learning method that maximizes global structure information of the data in the principal space.
\item \textit{LS}: Laplacian score (LS) method \cite{he2005laplacian} uses the Laplacian score to evaluate effectiveness of the features. It selects the features individually that retain the samples' local similarity specified by a similarity matrix.
\item \textit{MCFS}: Multi-cluster feature selection (MCFS) \cite{cai2010unsupervised} is a two-step method, and it formulates the feature selection process as a spectral information regression problem with $\ell_{1}$-norm regularization term.
\item \textit{UDFS}: Unsupervised discriminative feature selection (UDFS) method  \cite{yang2011l2} combines the data's local discriminative property and the $\ell_{2,1}$-norm sparse constraint in one convex model to select the features which have the highest power of local discriminative property.
\item \textit{RSR}: Regularized self-representation (RSR) feature selection method \cite{zhu2015unsupervised} uses the $\ell_{2,1}$-norm to measure the fitting error and also $\ell_{2,1}$-norm to promote sparsity. Specifically, it solves the following problem:
    $$\min_W \|X-XW\|_{2,1}+\beta\|W\|_{2,1}.$$
\item \textit{NDFS}: Nonnegative Discriminative Feature Selection (NDFS) method \cite{li2012unsupervised} utilizes the nonnegative spectral analysis with $\ell_{2,1}$-norm regularization term.
\item \textit{GLSPFS}: Global and local structure preservation for feature selection (GLSPFS) method \cite{liu2014global} uses both global and local similarity structure to model the feature selection problem. It solves the following problem:
    $$\min_W \|V-XW\|_F^2+\mu Tr(W^\top X^\top LXW)+ \beta\|W\|_{2,1}$$
\item \textit{MFFS}: Matrix factorization feature selection (MFFS) method \cite{wang2015subspace} is similar to ours. It performs the subspace learning and feature selection process simultaneously by enforcing a nonnegative orthogonal transformation matrix $W$. This solves the following problem:
    \begin{equation}
      \begin{split}
      &\min_{W,H} \frac{1}{2}\|X-XWH\|_F^2\\
      &\ \text{s.t.}\quad W\geq 0,\ H\geq 0,\ W^\top W=I.
      \end{split}
    \end{equation}
\end{enumerate}

There are some parameters we need to set in advance. The dimension of the subspace is fixed to $K=100$ for GLoSS method, and the number of selected features $\kappa$ is taken from $\{20,30,40,50,60,70,80,90,100\}$ for all datasets. We use the LPP method in section \ref{sec:lpp} to preserve local structure of the data in GLSPFS, NDFS, GLPSL and GLoSS because both MCFS and LS use the LPP Laplacian graph, and we set the number of nearest neighbors to $m=5$ for LS, MCFS, UDFS, GLSPFS, NDFS, GLPSL and GLoSS. The parameter $m$ is required by LS, MCFS, GLSPFS, NDFS, GLPSL and GLoSS to build a similarity matrix and UDFS to build the local total scatter and between-class scatter matrices. 
For simplicity, the parameter of local structure preserving term is fixed to $\mu = 1$ in GLSPFS and GLoSS for all the tests in Sections \ref{sec:test1} and \ref{sec:test2}. We study the sensitivity of GLoSS to $\mu$ in Section \ref{sec:test3}.  
The sparsity parameter for UDFS, RSR, GLSPFS, NDFS and GLoSS is tuned from $\{0.01,0.1,1,10,40,70,100\}$. After completing the feature selection process, we use the $K$-means algorithm to cluster the samples using the selected features. The number of iterations of UDFS, GLSPFS, NDFS, MFFS, and GLoSS are set to 30. Because the performance of $K$-means depends on the initial point, we run it 20 times with different random starting points and report the average value. 

The compared algorithms are evaluated based on their clustering results. 
For each sample of all datasets, we set its class number as the cluster number. To measure the clustering performance, we use clustering accuracy (ACC) and normalized mutual information (NMI), which are defined below. 
Let $p_i$ and $q_i$ be the predicted and true labels of the $i^{-th}$ sample, respectively. The ACC is computed as 
\begin{equation}\label{eqn.EquTrace}
  \begin{split}
ACC = \frac{\sum_{i=1}^n \delta(q_i,map(p_i))}{n},
  \end{split}
\end{equation}
where $\delta(a,b)=1$ if $a=b$ and $\delta(a,b)=0$ otherwise, and $map(\cdot)$ is a permutation mapping that maps each predicted label to the equivalent true label. We use the Kuhn-Munkres algorithm \cite{laszlo2009matching} to realize such a mapping. High value of ACC indicates the predicted labels are close to the true ones, and thus the higher ACC is, the better the clustering result is.
The NMI is used to measure the similarity of two clustering results. For two label vectors $P$ and $Q$, it is defined as 
\begin{equation}\label{eqn.EquTrace}
  \begin{split}
NMI(P,Q)= \frac{I(P,Q)}{\sqrt{H(P)H(Q)}},
  \end{split}
\end{equation}
where $I(P,Q)$ is the mutual information of $P$ and $Q$, $H(P)$ and $H(Q)$ are the entropies of $P$ and $Q$ \cite{gray2011entropy}. In our experiments, $P$ contains the clustering labels using the selected features and $Q$ the true labels of samples in the dataset. Higher value of $NMI(P,Q)$ implies that $P$ better predicts $Q$.

\subsection{Experimental results}
In this subsection, we report the results of all tested methods. In addition, we study the sensitivity of the parameters present in \eqref{eqn.LPSSL}.

\subsubsection{Performance comparison}\label{sec:test1}
In Tables \ref{tab.ACC} and \ref{tab.NMI}, we present the ACC and NMI values produced by different methods. For each method, we vary the number of selected features among $\{20,30,40,\ldots,100\}$ and report the best result. 
From the tables, we see that GLoSS performs the best among all the compared methods except for Yale64 and WarpPIE in Table \ref{tab.ACC} and Yale64 and Orl64 in Table \ref{tab.NMI}, for each of which GLoSS is the second best. 
In addition, we see that the greedy method GLPSL performs reasonably well in many cases but can be very bad in some cases such as Usps in both Tables, and this justifies our reason to relax \eqref{eqn.basicform} and develop GLoSS method. Finally, we see that GLoSS outperforms MFFS for all datasets, and this is possibly due to the local structure preserving term used in GLoSS.
\begin{table}[htp]
\centering
\footnotesize
\caption{Clustering results (ACC\% $\pm$ std\%) of different feature selection algorithms on different datasets. The best results are
highlighted in \textbf{bold} and the second best results are underlined. (The higher ACC is, the better the result is.)}
\label{tab.ACC}
\begin{tabular}{ccccccc}
  \hline
  Dateset    & Isolet           & Yale64           & Orl64            & WarpPIE &        Usps               & Orlraw \\
  \hline
  PCA & 47.90 $\pm$ 2.97 & 32.79 $\pm$ 3.22 & 33.75 $\pm$ 1.58 & 39.95 $\pm$4.37 & 59.90 $\pm$ 3.89 & 48.20 $\pm$ 3.68\\
  LS & 55.14 $\pm$ 3.15 & 41.25 $\pm$ 3.28 & 41.75 $\pm$ 1.71 & 32.33 $\pm$ 1.03 & 59.79 $\pm$ 2.72 & 66.12 $\pm$ 6.82 \\
  MCFS         & 54.95 $\pm$ 3.28 & 44.88 $\pm$ 3.72 & 50.75 $\pm$ 1.25 & 50.38 $\pm$ 2.25 & \underline{66.55 $\pm$ 3.11}& {77.43 $\pm$ 7.15} \\
  UDFS       & 29.60 $\pm$ 2.73 & 38.21 $\pm$ 3.02 & 40.78 $\pm$ 1.03 & \textbf{55.57} $\pm$ \textbf{2.92} & 50.59 $\pm$ 1.97 & 65.32 $\pm$ 6.18 \\
  RSR       & 49.88 $\pm$ 3.75 & 45.48 $\pm$ 3.34 & {53.24 $\pm$ 1.83} & 37.52 $\pm$ 2.23 & 62.54 $\pm$ 2.34 & 72.54 $\pm$ 6.52 \\
  NDFS    & 54.33 $\pm$ 3.73 & 45.79 $\pm$ 3.81 & 49.85 $\pm$ 1.69 & 34.10 $\pm$ 3.81 & 63.32 $\pm$ 3.35 & 67.80 $\pm$ 6.48\\
  GLSPFS & 54.09 $\pm$ 3.22 & 50.84 $\pm$ 5.34 & \underline{53.63 $\pm$ 2.62} & 45.94 $\pm$ 2.38 & 64.65 $\pm$ 3.69 & \underline{78.00 $\pm$ 7.47}\\
  MFFS      & \underline{55.39 $\pm$ 3.32} & 49.09 $\pm$ 3.64 & 50.19 $\pm$ 1.64 & 36.57 $\pm$ 2.32  & 63.30 $\pm$ 3.36  & 73.55 $\pm$ 7.68\\
  GLPSL & 49.05 $\pm$ 3.02 & \textbf{53.97} $\pm$ \textbf{3.45} & 41.72 $\pm$ 1.05 & 47.52 $\pm$ 1.87 & 51.91 $\pm$ 2.18 & 72.16 $\pm$ 7.03 \\
  GLoSS    & \textbf{62.45} $\pm$ \textbf{3.58} & \underline{53.45 $\pm$ 3.88} & \textbf{54.27} $\pm$ \textbf{1.87} & \underline{52.76 $\pm$ 2.12} & \textbf{67.24} $\pm$ \textbf{3.27} & \textbf{79.37} $\pm$ \textbf{7.34}\\
  \hline
\end{tabular}
\end{table}

\subsubsection{Compare the performance with all features}\label{sec:test2}
To illustrate the effect of feature selection to clustering, we compare the clustering results using all features and selected features given by different methods. Figure \ref{fig.ACC} plots the ACC value and Figure \ref{fig.NMI} the NMI value with respect to the number of selected features. The baseline corresponds to the results using all features. 
From the figures, we see that in most cases, the proposed GLoSS method gives the best results, and selecting reasonably many features (but far less than the total number of features), it can give comparable and even better clustering results than those by using all features. Hence, the feature selection eliminates the redundancy of the data for clustering purpose. In addition, note that using fewer features can save the clustering time of the $K$-means method, and thus feature selection can improve both clustering accuracy and efficiency. 

\begin{table}[htp]
\centering
\footnotesize
\caption{Clustering results (NMI\% $\pm$ std\%) of different feature selection algorithms on different datasets. The best results are
highlighted in \textbf{bold} and the second best results are underlined. (The higher NMI, the better result is.)}
\label{tab.NMI}
\begin{tabular}{ccccccc}
  \hline
  Dateset    & Isolet           & Yale64           & Orl64            & WarpPIE &        Usps               & Orlraw \\
  \hline
  PCA & 61.48 $\pm$ 1.20 & 41.43 $\pm$ 2.72 & 58.57 $\pm$ 0.86 & 42.83 $\pm$ 3.82 & 56.08 $\pm$ 1.54 & 57.30 $\pm$ 3.93\\
  LS & 69.73 $\pm$ 1.43 & 46.88 $\pm$ 2.07 & 62.61 $\pm$ 1.53 & 30.06 $\pm$ 2.89 & 56.62 $\pm$ 0.95 & 73.38 $\pm$ 3.12\\
  MCFS         & 69.82 $\pm$ 1.37 & 53.70 $\pm$ 1.58 & 69.33 $\pm$ 1.62 & 54.37 $\pm$ 4.95 & \underline{61.01 $\pm$ 0.92}& 83.91 $\pm$ 3.53 \\
  UDFS       & 44.98 $\pm$ 1.02 & 47.40 $\pm$ 1.64 & 62.38$\pm$ 1.41 & \underline{54.55 $\pm$ 4.38} & 41.31 $\pm$ 1.21 & 68.78 $\pm$ 3.45\\
  RSR       & 63.47 $\pm$ 1.10 & 56.08 $\pm$ 1.43 & 72.33 $\pm$ 1.75 & 41.81 $\pm$ 3.75 & 55.32 $\pm$ 1.52 & \underline{83.96 $\pm$ 4.35} \\
  NDFS   & 70.05 $\pm$ 2.00 & 54.67 $\pm$ 2.35 & 70.42 $\pm$ 1.14 & 28.16 $\pm$ 4.45 & 58.78 $\pm$ 0.99 & 78.81 $\pm$ 3.99\\
  GLSPFS & 68.80 $\pm$ 1.07 & 56.18 $\pm$ 3.40 & \textbf{73.05 $\pm$ 1.52} & 52.23 $\pm$ 4.42 & 60.33 $\pm$ 1.65 & 82.99 $\pm$ 4.73\\
  MFFS      & \underline{72.64 $\pm$ 1.73} & 56.17 $\pm$ 4.47 & 70.65 $\pm$ 1.25 & 40.95 $\pm$ 3.39 & 59.11 $\pm$ 0.76 & 81.09 $\pm$ 4.12 \\
  GLPSL       & 65.41 $\pm$ 1.23 & \textbf{61.39} $\pm$ \textbf{1.72} & 64.76 $\pm$ 1.50 & 53.33 $\pm$ 3.89 & 40.98 $\pm$ 0.87 & 72.97 $\pm$ 3.37 \\
  GLoSS    & \textbf{74.28} $\pm$ \textbf{1.25} & \underline{58.87 $\pm$ 1.65} & \underline{73.02 $\pm$ 2.02} & \textbf{55.76} $\pm$ \textbf{4.56} & \textbf{61.29} $\pm$ \textbf{1.25} & \textbf{85.65} $\pm$ \textbf{4.15}\\
  \hline
\end{tabular}
\end{table}

\begin{figure}[!ht]
            \centering
             \includegraphics[width=\textwidth]{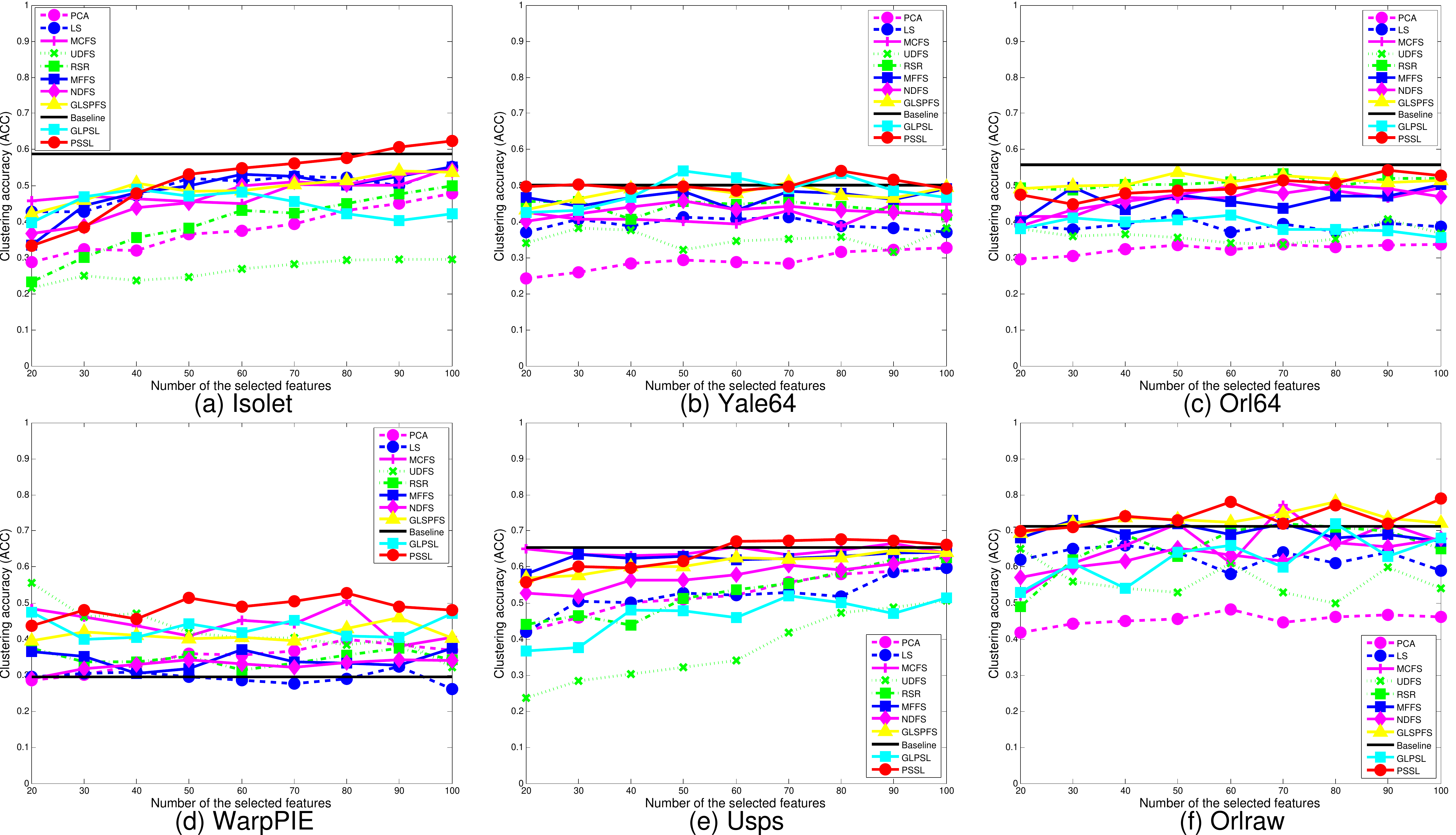}
            \caption{The clustering accuracy (ACC) of using all features and selected features by different methods.}
            \label{fig.ACC}
\end{figure}

\begin{figure}[!ht]
            \centering
            \centering
             \includegraphics[width=\textwidth]{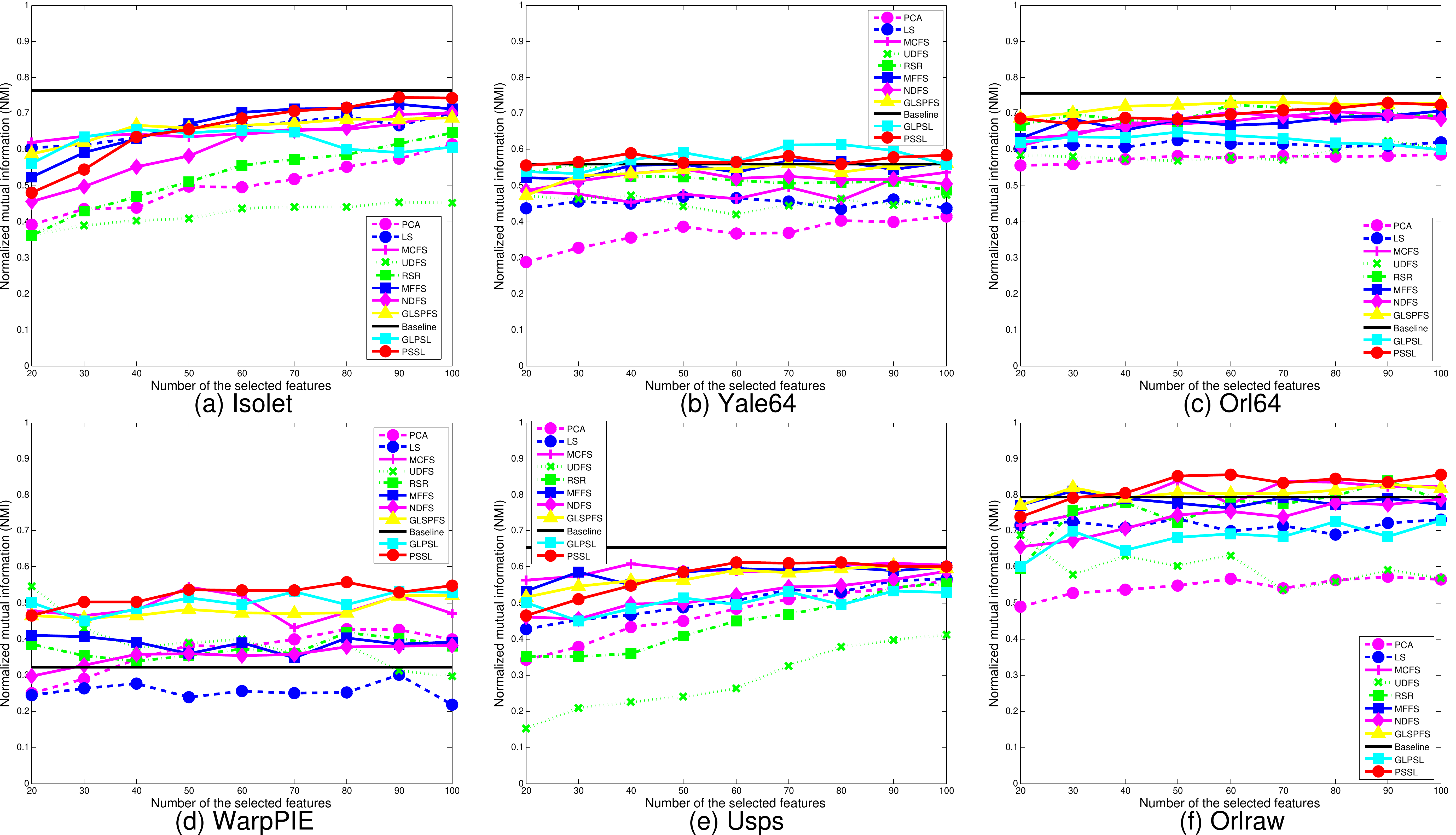}
            \caption{The normalized mutual information (NMI) of using all features and selected features by different methods.}
            \label{fig.NMI}
\end{figure}

\subsubsection{Sensitivity of parameters}\label{sec:test3}
To further demonstrate the performance of the proposed GLoSS method, we study its sensitivity with regard to the parameters $\kappa, \mu$ and $\beta$ in \eqref{eqn.LPSSL}. First, we fix $\mu=1$ and vary $\kappa$ and $\beta$. Figures \ref{fig.ACCbeta} and \ref{fig.NMIbeta} plot the ACC and NMI values given by GLoSS for different $\kappa$ and $\beta$'s. From the figures, we see that except for Isolet, GLoSS performs stably well for different combinations of $\kappa$ and $\beta$, and thus the users can choose the parameters within a large interval to have satisfactory clustering performance. Secondly, we fix $\beta=1$ and vary $\kappa$ and $\mu$. 
Figures \ref{fig.ACCmu} and \ref{fig.NMImu} plot the ACC and NMI values given by GLoSS for different $\kappa$ and $\mu$'s. Again, we see that GLoSS performs stably well except for the Isolet dataset.
\begin{figure}[!ht]
            \centering
             \includegraphics[width=\textwidth]{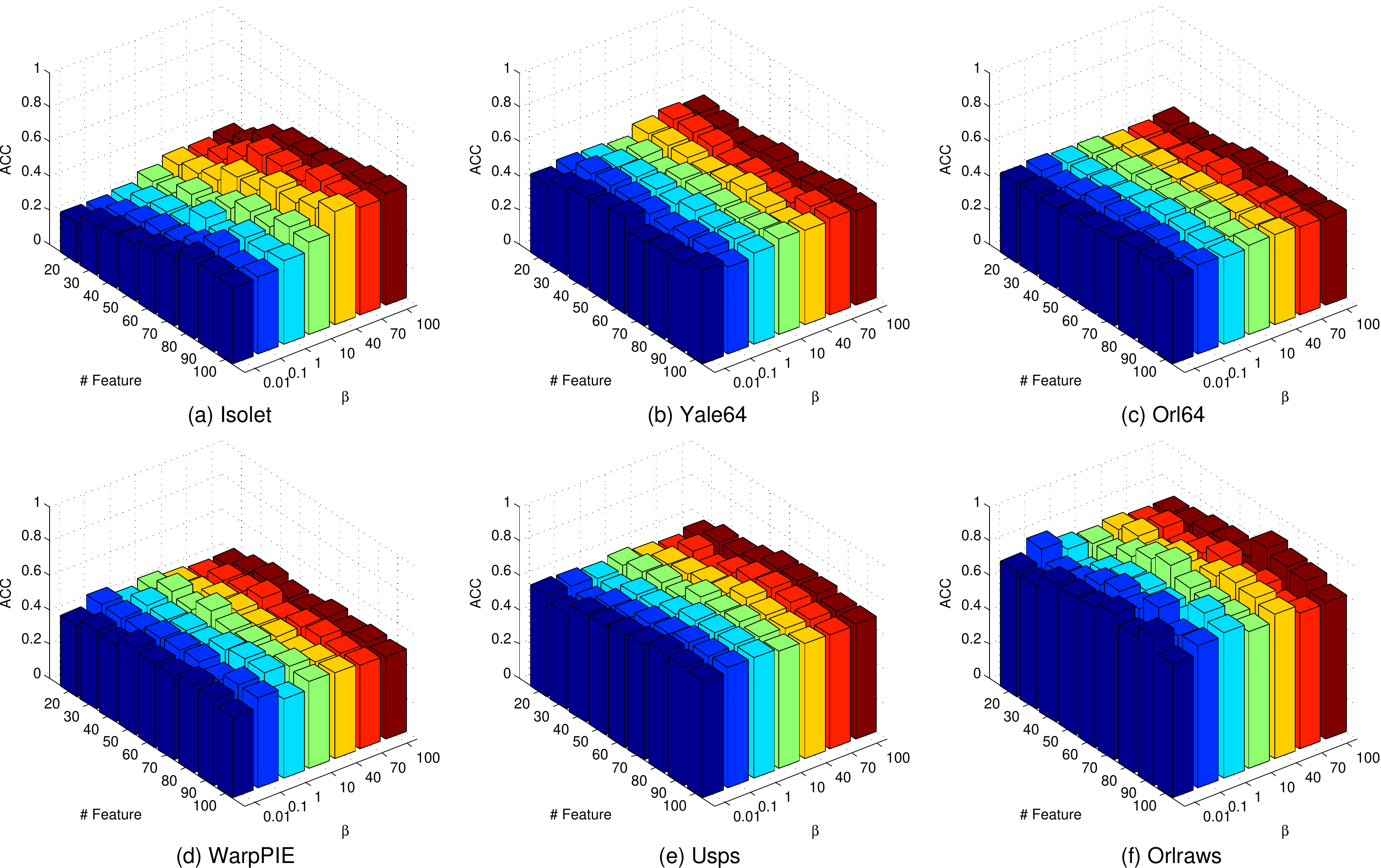}
            \caption{Clustering accuracy (ACC) produced by GLoSS with different $\kappa$ and $\beta$.}
            \label{fig.ACCbeta}
\end{figure}

\begin{figure}[!ht]
            \centering
            \centering
             \includegraphics[width=\textwidth]{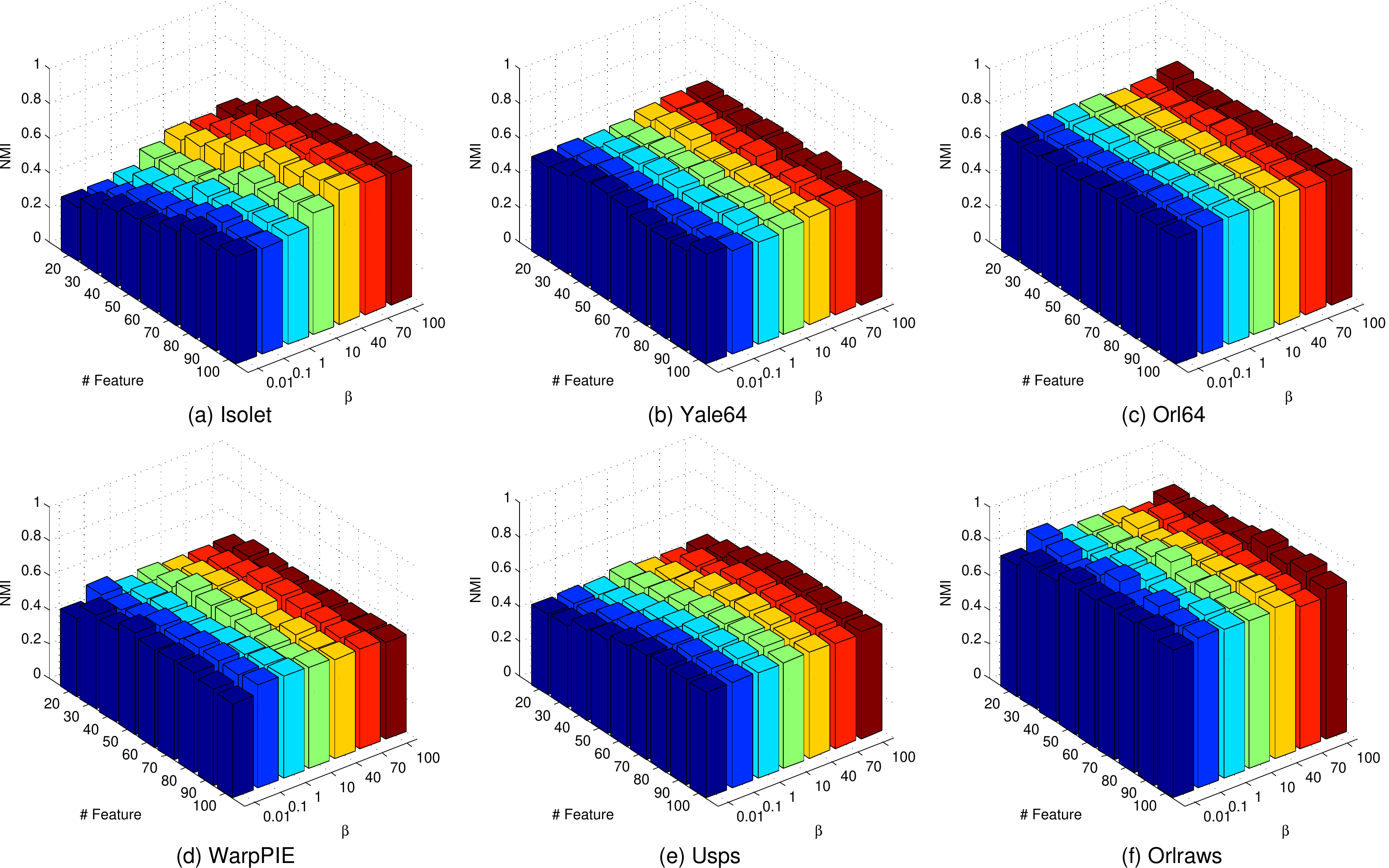}
            \caption{Normalized mutual information (NMI) produced by GLoSS with different $\kappa$ and $\beta$.}
            \label{fig.NMIbeta}
\end{figure}

\begin{figure}[!ht]
            \centering
             \includegraphics[width=\textwidth]{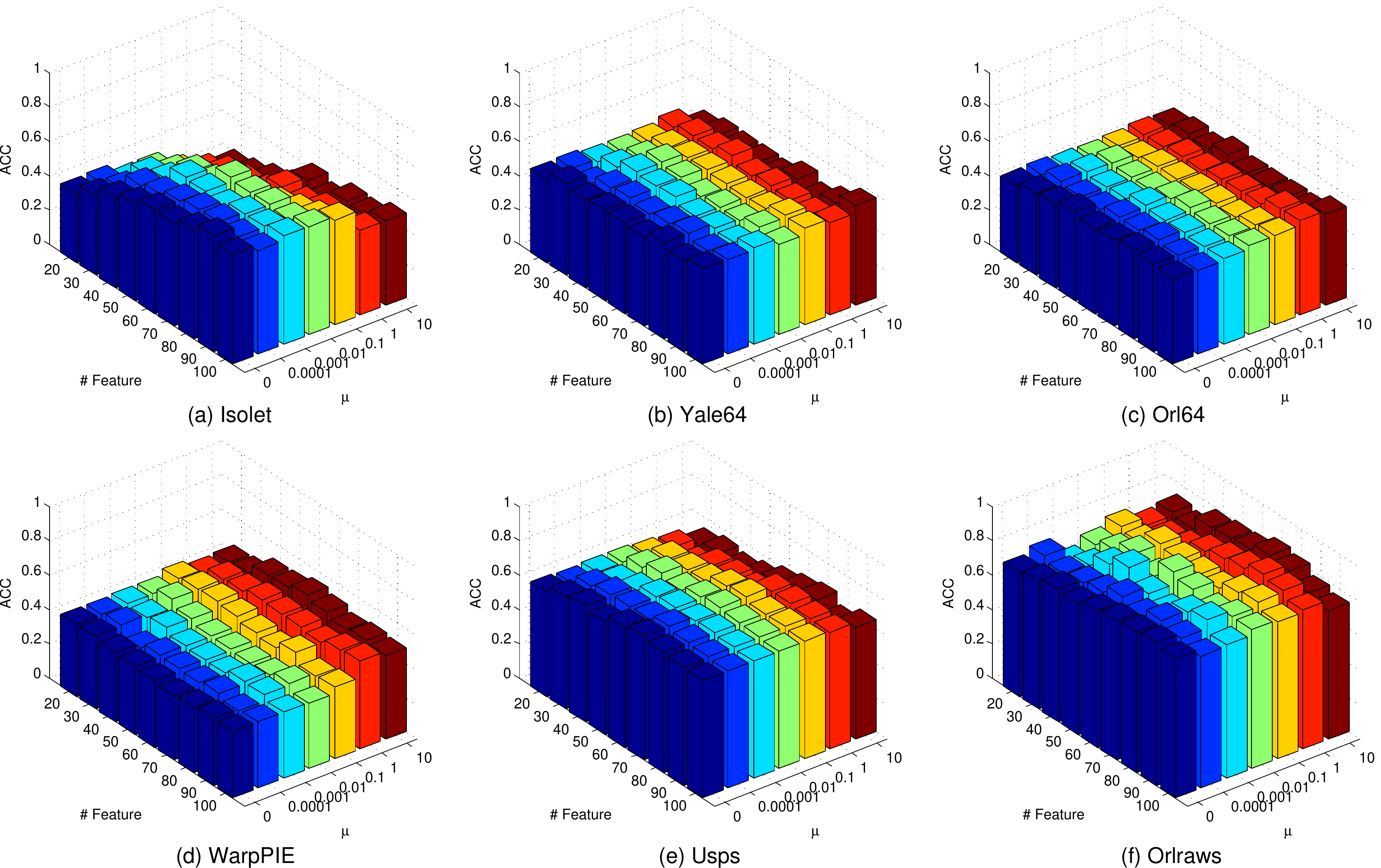}
            \caption{The clustering accuracy (ACC) given by GLoSS with different $\kappa$ and $\mu$.}
            \label{fig.ACCmu}
\end{figure}

\begin{figure}[!ht]
            \centering
            \centering
             \includegraphics[width=\textwidth]{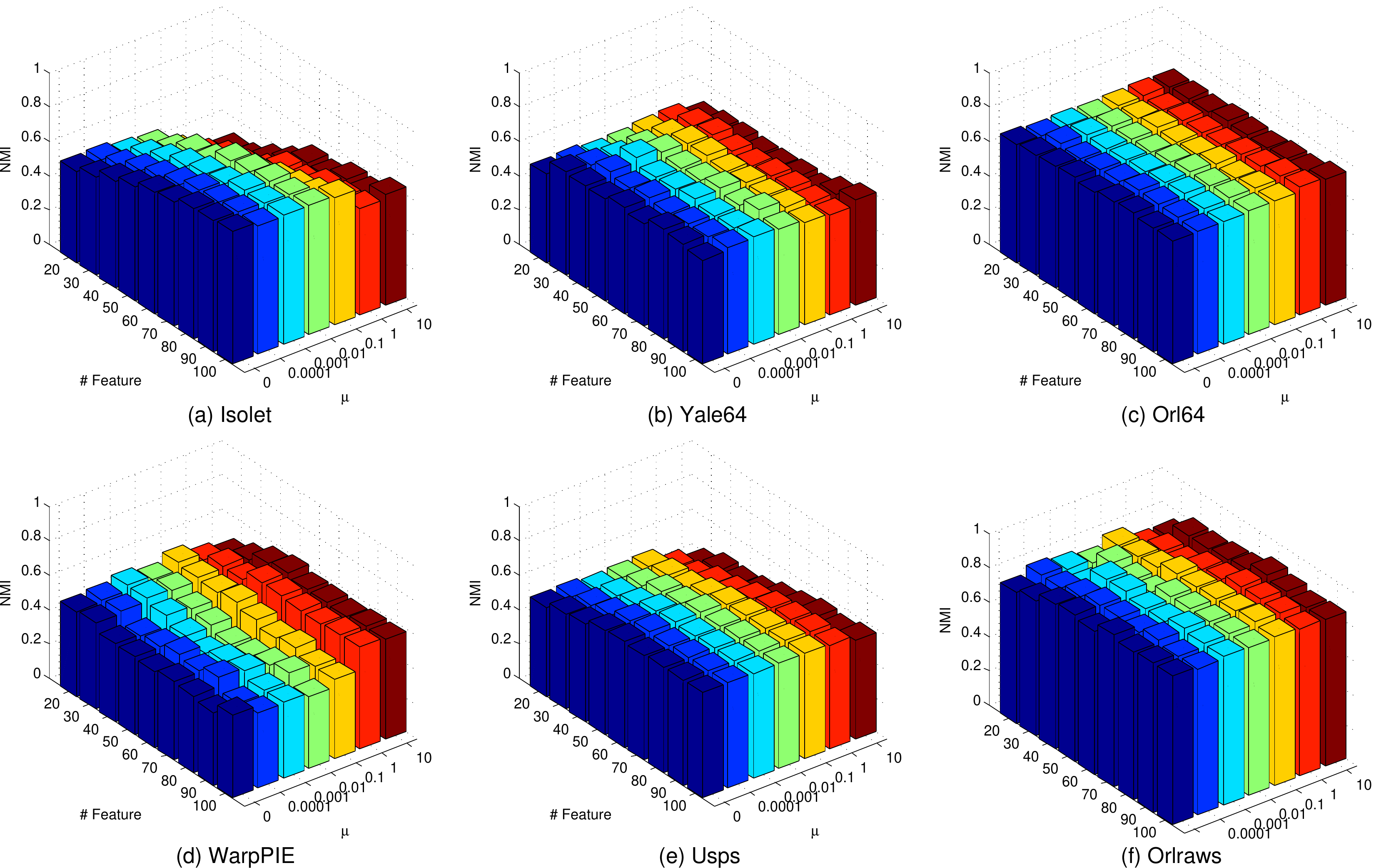}
            \caption{The normalized mutual information (NMI) given by GLoSS with different $\kappa$ and $\mu$.}
            \label{fig.NMImu}
\end{figure}

\section{Conclusions}\label{sec.conclusion}
We have proposed a new unsupervised joint model on subspace learning and feature selection. The model preserves both global and local structure of the data, and it is derived by relaxing an existing combinatorial model with 0-1 variables. A greedy algorithm has been developed, for the first time, to solve the combinatorial problem, and an accelerated block coordinate descent (BCD) method was applied to solve the relaxed continuous probelm. We have established the whole iterate sequence convergence of the BCD method. Extensive numerical tests on real-world data demonstrated that the proposed method outperformed several state-of-the-art unsupervised feature selection methods.

\section*{Acknowledgements}
The authors would like to thank two anonymous referees for their careful reviews and constructive comments.
Y. Xu is partially supported by AFOSR.
W. Pedrycz is partially supported by NSERC and CRC.

\appendix
\section{Proof of Theorem \ref{thm:subcvg}}
For simplicity, we assume $\omega_k=0,\,\forall k$, i.e., there is no extrapolation. The case of $\omega_k\not\equiv0$ is more complicated but can be treated similarly with more care taken to handle details; see \cite{xu2013block} for example.

The following result is well-known (c.f. Lemma 2.1 of \cite{xu2013block})
\begin{equation}\label{eq:diff-w}
F(W^{k},H^k)-F(W^{k+1},H^k)\ge\frac{L_w^k}{2}\|W^{k+1}-W^k\|_F^2\ge\frac{L_\mu}{2}\|W^{k+1}-W^k\|_F^2,
\end{equation}
where
\begin{equation}\label{eq:lip-mu}
L_\mu=\mu\|X^\top LX\|_2>0.
\end{equation}
By Lemma 3.1 of \cite{xu2014ihosvd}, we have
\begin{equation}\label{eq:diff-h}
\frac{1}{2}\|X-XW^{k+1}H^k\|_F^2-\frac{1}{2}\|X-XW^{k+1}H^{k+1}\|_F^2=\frac{1}{2}\|XW^{k+1}H^k-XW^{k+1}H^{k+1}\|_F^2
\end{equation}
and
\begin{equation}\label{eqh1}
XW^{k+1}H^k-XW^{k+1}H^{k+1}=U^{k+1}(U^{k+1})^\top\big(XW^{k+1}H^k-X\big),
\end{equation}
where $U^{k+1}$ contains the left $r$ leading singular vectors of $XW^{k+1}$ and $r$ is the rank of $XW^{k+1}$.

Note that $$F(W^{k+1},H^k)- F(W^{k+1},H^{k+1})=\frac{1}{2}\|X-XW^{k+1}H^k\|_F^2-\frac{1}{2}\|X-XW^{k+1}H^{k+1}\|_F^2.$$
Hence, summing \eqref{eq:diff-w} and \eqref{eq:diff-h} over $k$ and noting nonnegativity of $F$ we obtain
$$\sum_{k=0}^\infty\left(\frac{L_\mu}{2}\|W^{k+1}-W^k\|_F^2+\frac{1}{2}\|XW^{k+1}H^k-XW^{k+1}H^{k+1}\|_F^2\right)\le F(W^{0},H^0),$$
and thus
\begin{equation}\label{eq:lim-w}
\lim_{k\to\infty}W^{k+1}-W^k=0.
\end{equation}
and
\begin{equation}\label{eq:lim-h}
\lim_{k\to\infty}U^{k+1}(U^{k+1})^\top\big(XW^{k+1}H^k-X\big)=\lim_{k\to\infty}XW^{k+1}H^k-XW^{k+1}H^{k+1}=0.
\end{equation}
Combining the two equalities in \eqref{eq:lim-h}, we have
\begin{equation*}
\lim_{k\to\infty}U^{k}(U^{k})^\top\big(XW^{k}H^k-X\big)=0.
\end{equation*}
Since $\{XW^k\}$ is bounded and $(XW^k)^\top=(XW^k)^\top U^{k}(U^{k})^\top$, left multiplying $(XW^k)^\top$ in the above equation gives
\begin{equation}\label{eq:lim-h2}
\lim_{k\to\infty}(XW^k)^\top\big(XW^{k}H^k-X\big)=0.
\end{equation}

Assume $(\bar{W},\bar{H})$ is a finite limit point of $\{(W^k,H^k)\}_{k=1}^\infty$. Then there exists a subsequence $\{(W^k,H^k)\}_{k\in\mathcal{K}}$ convergent to $(\bar{W},\bar{H})$. If necessary, taking another subsequence, we can assume $L_w^k\to \bar{L}$ for some $\bar{L}>0$ as $\mathcal{K}\ni k\to\infty$. From \eqref{eq:lim-h2}, it holds that
$$\nabla_H f(\bar{W},\bar{H})=(X\bar{W})^\top(X\bar{W}\bar{H}-X)=0.$$
In addition, from the update rule of $W$, we have
$$W^{k+1}=\argmin_{W\ge 0}\langle \nabla_W f(W^k,H^k),W-W^k\rangle+\frac{L_w^k}{2}\|W-W^k\|_F^2+g_\beta(W).$$
Letting $\mathcal{K}\ni k\to\infty$ in the above equation and using \eqref{eq:lim-w} yield
$$\bar{W}=\argmin_{W\ge 0}\langle \nabla_W f(\bar{W},\bar{H}),W-\bar{W}\rangle+\frac{\bar{L}}{2}\|W-\bar{W}\|_F^2+g_\beta(W),$$
which implies
$$0\in \nabla_W f(\bar{W},\bar{H})+\partial g_\beta(W) +\partial \iota_+(\bar{W})=\partial_W F(\bar{W},\bar{H}).$$
Therefore, $(\bar{W},\bar{H})$ is a critical point of \eqref{eqn.LPSSL}.

\section{Proof of Theorem \ref{thm:glbcvg}}
For simplicity of notation, we let $Z^k=(W^k,H^k)$ and $\bar{Z}=(\bar{W},\bar{H})$. In addition, we assume $\omega_k=0,\forall k$ as in the proof of Theorem \ref{thm:subcvg}. Again, the case of $\omega_k\not\equiv 0$ can be shown similarly.
Let $\sigma_{\min}(X\bar{W})>0$ be the smallest singular value of $X\bar{W}$. By the continuity of singular value function and spectral norm of a matrix, there exists $\delta>0$ such that
\begin{subequations}\label{eq:conds}
\begin{align}
&\sigma_{\min}(XW)\ge \frac{\sigma_{\min}(X\bar{W})}{2},\text{ and }\|XW\|_2\le 2\|X\bar{W}\|_2,\,\forall W\in \mathcal{B}(\bar{W},\delta),\label{eq:cont-w}\\
&\|HH^\top\|_2\le 2\|\bar{H}\bar{H}^\top\|_2,\,\forall H\in \mathcal{B}(\bar{H},\delta),\label{eq:cont-h}
\end{align}
\end{subequations}
where $\sigma_{\min}(A)$ denotes the smallest singular value of matrix $A$, and $\mathcal{B}(\bar{A},\delta):=\{A:\,\|A-\bar{A}\|_F\le \delta\}$.

Since $F$ is a semi-algebraic function and continuous in its domain, it exhibits the so-called Kurdyka-{\L}ojasiewicz property (c.f. \cite{bolte2007lojasiewicz}): in a neighborhood $\mathcal{B}(\bar{Z},\rho)$, there exists $\phi(s)=cs^{1-\theta}$ for some $c>0$ and $0\le \theta <1$ such that
\begin{equation}\label{eq:KL-F}
\phi'(|F(Z)-F(\bar{Z})|)\text{dist}(0,\partial F(Z))\ge 1, \text{ for any }Z\in\mathcal{B}(\bar{Z},\rho)\cap\text{dom}(F)\text{ and }F(Z)\neq F(\bar{Z}).
\end{equation}
Let
$$F_k=F(Z^k)-F(\bar{Z}),\text{ and }\phi_k=\phi(F_k).$$
Without loss of generality, we assume $Z^0$ is sufficiently close to $\bar{Z}$ such that
\begin{equation}\label{eq:init}
2\|Z^0-\bar{Z}\|_F+3\left(\sqrt{\frac{2F_0}{L_\mu}}+\sqrt{\frac{8F_0}{\sigma_{\min}^2(X\bar{W})}}\right)+\frac{C_1}{2C_2}\phi_0<\rho,
\end{equation}
where $L_\mu$ is defined in \eqref{eq:lip-mu}, and
\begin{align}
&C_1=L_\delta + 2\|\bar{H}\bar{H}^\top\|_2\|XX^\top\|_2+L_\mu,\label{eq:c1}\\
&C_2=\frac{L_\mu}{2}+\frac{\sigma_{\min}^2(X\bar{W})}{8}.\label{eq:c2}
\end{align}
In the above equation, $L_\delta$ is the Lipschitz constant of $\nabla_W f(W,H)$ in $\mathcal{B}(\bar{Z},\delta)$, i.e.,
\begin{equation}\label{eq:liprho}
\|\nabla_W f(\hat{Z})-\nabla_W f(\tilde{Z})\|_F\le L_\delta \|\hat{Z}-\tilde{Z}\|_F,\,\forall \hat{Z},\tilde{Z}\in\mathcal{B}(\bar{Z},\delta).
\end{equation}
Note that $L_\delta$ must be finite since $f(W,H)$ is twice continuous differentiable and $\mathcal{B}(\bar{Z},\delta)$ is bounded. Otherwise if \eqref{eq:init} does not hold, since $\bar{Z}$ is a limit point of $\{Z^k\}$, we can take an iterate $Z^{k_0}$ sufficiently close to $\bar{Z}$ and let $Z^{k_0}$ be the new starting point. If neccessary, taking a smaller $\rho$, we assume
\begin{equation}\label{eq:cond-rho}
\rho+\sqrt{\frac{2F_0}{L_\mu}}\le \delta,
\end{equation}
where $\delta$ is the quantity in \eqref{eq:conds}.

From \eqref{eq:diff-w} and $F_{k+1}\le F_k \le F(\bar{Z}),\,\forall k$, we have
$\|W^1-W^0\|_F\le \sqrt{\frac{2F_0}{L_\mu}}$ and thus
\begin{equation}\label{eq:w1}\|W^1-\bar{W}\|_F\le\|W^1-W^0\|_F+\|W^0-\bar{W}\|_F\le \|W^0-\bar{W}\|_F+\sqrt{\frac{2F_0}{L_\mu}}<\rho\le \delta.
\end{equation}
Hence, $\sigma_{\min}(XW^1)\ge\frac{\sigma_{\min}(X\bar{W})}{2}$ from \eqref{eq:cont-w}, and
$$F(W^1,H^0)-F(W^1,H^1)\ge\frac{[\sigma_{\min}(XW^1)]^2}{2}\|H^1-H^0\|_F^2\ge\frac{[\sigma_{\min}(X\bar{W})]^2}{8}\|H^1-H^0\|_F^2,$$
which implies $\|H^1-H^0\|_F\le \sqrt{\frac{8F_0}{[\sigma_{\min}(X\bar{W})]^2}}$. Therefore,
\begin{equation}\label{eq:h1}
\|H^1-\bar{H}\|_F\le \|H^1-H^0\|_F+\|H^0-\bar{H}\|_F\le \|H^0-\bar{H}\|_F+ \sqrt{\frac{8F_0}{[\sigma_{\min}(X\bar{W})]^2}}.
\end{equation}
Combining \eqref{eq:w1} and \eqref{eq:h1}, we have
$$\|Z^1-\bar{Z}\|_F\le \|W^1-\bar{W}\|_F+\|H^1-\bar{H}\|_F\le 2\|Z^0-\bar{Z}\|_F+\sqrt{\frac{2F_0}{L_\mu}}+ \sqrt{\frac{8F_0}{\sigma_{\min}^2(X\bar{W})}},$$
which together with \eqref{eq:init} implies $Z^1\in \mathcal{B}(\bar{Z},\rho)$.

Assume that for some integer $K$, $Z^k\in\mathcal{B}(\bar{Z},\rho), \forall 0\le k\le K$. We go to show $Z^{K+1}\in\mathcal{B}(\bar{Z},\rho)$ and thus by induction $Z^k\in\mathcal{B}(\bar{Z},\rho),\,\forall k$. Note that
\begin{align*}
&0\in\nabla_W f(W^{k-1},H^{k-1}) + L_w^{k-1}(W^{k}-W^{k-1})+\partial g_\beta(W^{k})+\partial\iota_+(W^{k}),\\
&0=\nabla_H f(W^{k},H^{k}).
\end{align*}
Hence,
\begin{align}
\text{dist}(0,\partial F(Z^k))\le & \|\nabla_W f(W^{k},H^{k})-\nabla_W f(W^{k-1},H^{k-1})\|_F+L_w^{k-1}\|W^{k}-W^{k-1}\|_F\cr
\le & C_1 \|Z^k-Z^{k-1}\|_F,\label{eq:dist}
\end{align}
where $C_1$ is defined in \eqref{eq:c1}. 
In addition, we have
\begin{align}
& \phi_k-\phi_{k+1}\cr
\ge & \phi'(F_k)(F_k-F_{k+1})\quad(\text{ from concavity of }\phi)\cr
\ge & \frac{F_k-F_{k+1}}{C_1 \|Z^k-Z^{k-1}\|_F}\quad(\text{ from KL property }\eqref{eq:KL-F})\cr
\ge & \frac{C_2 \|Z^{k+1}-Z^{k}\|_F^2}{C_1 \|Z^k-Z^{k-1}\|_F},\label{eq:sqrt}
\end{align}
where the last inequality follows from \eqref{eq:c2}, \eqref{eq:diff-w} and
$$F(W^{k+1},H^k)-F(W^{k+1},H^{k+1})\ge \frac{\sigma_{\min}^2(X\bar{W})}{8}\|H^k-H^{k+1}\|_F^2.$$
Transforming \eqref{eq:sqrt} gives
\begin{align*}
&C_2 \|Z^{k+1}-Z^{k}\|_F^2\le C_1 \|Z^k-Z^{k-1}\|_F(\phi_k-\phi_{k+1})\\
\Rightarrow &\sqrt{C_2} \|Z^{k+1}-Z^{k}\|_F\le\sqrt{C_1 \|Z^k-Z^{k-1}\|_F(\phi_k-\phi_{k+1})}\\
\Rightarrow & \sqrt{C_2} \|Z^{k+1}-Z^{k}\|_F\le\frac{\sqrt{C_2}}{2}\|Z^k-Z^{k-1}\|_F+\frac{C_1}{2\sqrt{C_2}}(\phi_k-\phi_{k+1}).
\end{align*}
Summing the above inequality over $k$ and arranging terms give
\begin{align}\label{eq:sum-K}
\sum_{k=1}^K\|Z^{k+1}-Z^k\|_F\le\|Z^1-Z^0\|_F+\frac{C_1}{2C_2}(\phi_1-\phi_{K+1}).
\end{align}
Hence,
\begin{align}
\|Z^{K+1}-\bar{Z}\|_F\le & \sum_{k=1}^K\|Z^{k+1}-Z^k\|_F+\|Z^1-\bar{Z}\|_F\cr
\le & \|Z^1-\bar{Z}\|_F+\|Z^1-Z^0\|_F+\frac{C_1}{2C_2}\phi_0\cr
\le & 2\|Z^1-Z^0\|_F+\|Z^0-\bar{Z}\|_F+\frac{C_1}{2C_2}\phi_0\cr
(\text{from }\eqref{eq:w1}\text{ and }\eqref{eq:h1})\quad\le &\, \rho,
\end{align}
which indicates $Z^{K+1}\in\mathcal{B}(\bar{Z},\rho)$. By induction, we have $Z^k\in\mathcal{B}(\bar{Z},\rho),\,\forall k,$ and thus \eqref{eq:sum-K} holds for all $K$. Therefore, $\{Z^k\}_{k=1}^\infty$ is a Cauchy sequence and converges. Since $\bar{Z}$ is a limit point, it must hold that $\lim_{k\to\infty}Z^k=\bar{Z}$. This completes the proof.

\section*{References}




\bibliographystyle{elsarticle-num}
\bibliography{bibLPSSLFS}

\end{document}